\newcommand{\R}{\mathbb{R}}     
\newcommand{\Cl}{\mathrm{C}\ell}
\DeclareMathOperator{\Spin}{Spin}
\DeclareMathOperator{\Pin}{Pin}
\newcommand{\e}{\mathbf{e}}
\DeclareMathOperator{\N}{n}
\DeclareMathOperator{\sgn}{sgn}
\newtheorem{theorem}{Theorem}
\newtheorem{lemma}{Lemma}
\newtheorem{proposition}{Proposition}
\begin{document}

\title{Swing-twist decomposition in Clifford algebra}

\author{Przemys{\l}aw Dobrowolski}

\address{Faculty of Mathematics and Information Science \\
         Warsaw University of Technology, Poland}

\begin{abstract}
The swing-twist decomposition is a standard routine in motion planning for humanoid limbs. In this paper the decomposition formulas are derived and discussed in terms of Clifford algebra. With the decomposition one can express an arbitrary spinor as a product of a twist-free spinor and a swing-free spinor (or vice-versa) in 3-dimensional Euclidean space. It is shown that in the derived decomposition formula the twist factor is a generalized projection of a spinor onto a vector in Clifford algebra. As a practical application of the introduced theory an optimized decomposition algorithm is proposed. It favourably compares to existing swing-twist decomposition implementations.
\end{abstract}

\maketitle

\section{Introduction}

Swing-twist decomposition of rotations is commonly used in context of humanoid motion planning. Consider movement of an arm reaching some predefined position. In order to displace it properly, a controller calculates twist factor of the corresponding rotation. Having this factor computed, a controller is then able to apply some corrections to the motion so that the arm is not unnaturally twisted during the motion. \textbf{Swing-twist decomposition} is an inherent part of a correction algorithm. It allows one to decompose an arbitrary rotation into a swing part (tilt of a given axis) and a twist part (rotation around a given axis). In this paper the decomposition is derived and discussed in terms of Clifford algebra.

Swing-twist decomposition has already been considered for a few decades. Different authors have obtained equivalent formulas, in particular for quaternion algebra. Current literature tends to neglect a deeper consideration on the spin-twist decomposition. There is wide range of publications on humanoid motion planning. Most of these works relate to swing-twist decomposition in some way. In the paper, the most revelant approaches to swing-twist decomposition are quoted and compared. Starting from the recent PhD dissertation by Huyghe (\cite{huyghe}, 2011), the swing-twist decomposition is introduced in quaterion algebra by a proposed therein projection operator. Unfortunately, the origin of the projection operator is not enough explained. In two previous papers, Baerlocher (\cite{baerlocher}, 2001) and Baerlocher, Boulic (\cite{baerlocher_boulic}, 2000) investigate joint boundaries for ball-and-socket joints using swing-twist decomposition. Grassia (\cite{grassia}, 1998) discusses features of a swing-twist decomposition in terms of \textbf{an exponential map}. However, the author does not present any related formula for the decomposition. A classic work by Korein (\cite{korein}, 1984) contains most of the initial results on body positioning and joint motion. Among others the work uses swing-twist decomposition of rotations to constrain movement of an elbow.

None of these are generalized to Clifford or geometric algebras which has deeper consequences than previous results. In this paper the swing-twist decomposition is derived as an inverse of a formula expressing the set of rotations which move a given initial vector to cover a given terminal one.

\section{Existing and related solutions}
\label{section_existing_and_related_solutions}

Swing-twist decomposition splits a given rotation in two parts: a swing part and a twist part. A schematic view of a limb rotating with a ball joint is presented in figure \ref{img:joint}.

\begin{figure}[ht]
\centering
\includegraphics[width=8cm]{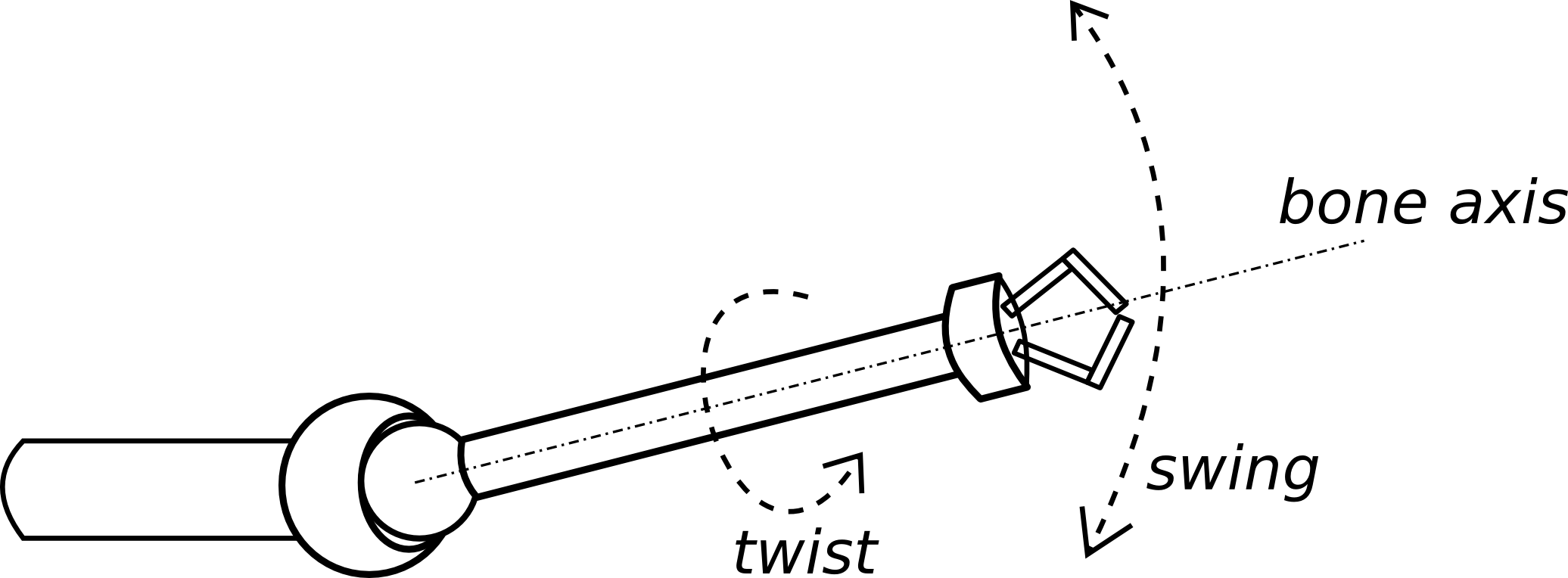}
\caption{A limb with a ball joint}
\label{img:joint}
\end{figure}

By \textbf{zero twist reference vector} one refers to a base vector with respect to which the swing-twist decomposition is performed. Usually, this is the bone of a rotating limb.

There are several existing approaches. These solutions differ in terms of performance, complexity and result exactness.

\subsection{Direct method}

For quaternions one can make the following argument. Let $q \in \mathbb{H}$ be a quaternion and $v$ be a zero-twist reference vector. In case of twist-after-swing type of decomposition (see section \ref{sec:two_st_decompositions}) the initial vector becomes the given $v$ vector and the terminal vector is $w = q v q^*$. Schematic view is presented in figure \ref{img:direct_decomposition}.

\begin{figure}[ht]
\centering
\includegraphics[width=3cm]{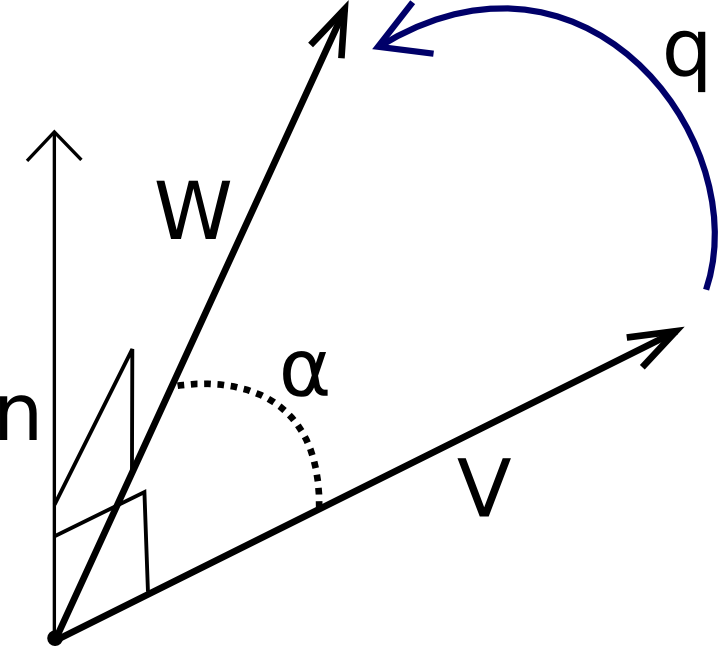}
\caption{A direct decomposition of a quaternion}
\label{img:direct_decomposition}
\end{figure}

Swing quaternion can be calculated with an axis angle representation of the quaternion. Here, the axis is a normalized vector perpendicular to $v$ and $w$ and the angle is equal to the angle between $v$ and $w$ so:
\begin{align*}
& n = \frac{v \times w}{\|v \times w\|} \\
& \cos(\alpha) = v \cdot w \\
& q_s = \cos(\alpha / 2) + \sin(\alpha / 2) (n_x \mathbf{i} + n_y \mathbf{j} + n_z \mathbf{k})
\end{align*}

Twist quaternion is calculated by the following inversion formula:
\begin{align*}
& q = q_t q_s \Longrightarrow q_t = q q_s^{-1} \\
& q_t = q (\cos(\alpha / 2) - \sin(\alpha / 2) (n_x \mathbf{i} + n_y \mathbf{j} + n_z \mathbf{k}))
\end{align*}

In this formulation both trigonometric and inverse trigonometric functions are used. It is a practical disadvantage since it requires computationally expensive functions which can also face some accuracy issues.

\subsection{Huyghe's method}

Huyghe uses swing-after-twist decomposition type. First a simplified formula is derived which is a decomposition of an arbitrary quaternion $q$ into a product
\begin{equation*}
q = q_s q_t
\end{equation*}
with respect to a constant Z-axis aligned zero-twist reference vector (called there "a twist axis"). With the following coordinates
\begin{align*}
q & = w + x \textbf{i} + y \textbf{j} + z \textbf{k} \\
q_t & = w_t + z_t \textbf{k} \\
q_s & = w_s + x_s \textbf{i} + y_s \textbf{j}
\end{align*}
the author shows using some algebraic transformations that the decomposition is
\begin{align*}
w_t & = \frac{\pm w}{\sqrt{w^2 + z^2}} \\
z_t & = \frac{\pm z}{\sqrt{w^2 + z^2}} \\
w_s & = w_t w + z_t z \\
x_s & = w_t x - z_t y \\
y_s & = w_t y + z_t x
\end{align*}
It must be noted that some of the results were not rigorously stated. It particular, scenarios when some coefficients are equal to zero are not discussed separately which leads to possible division by zero.

In the second part of the thesis, Huyghe derives a generalized formula for swing-twist decomposition. Assume that axis-angle representation of quaternions is
\begin{align*}
q_s = [w_s, v_s] = [\cos(\frac{\sigma}{2}), u_s \sin(\frac{\sigma}{2})] \\
q_t = [w_t, v_t] = [\cos(\frac{\tau}{2}), u_t \sin(\frac{\tau}{2})]
\end{align*}
Multiplying both quaternions one writes
\begin{equation} \label{eqn:huyghe_formula_one}
q = q_s q_t = [ w_s w_t - v_t \cdot v_s, w_s v_t + w_t v_s + v_s \times v_t ]
\end{equation}
Huyghe notes that the axes $v_s$ and $v_t$ are perpendicular so formula \eqref{eqn:huyghe_formula_one} simplifies to
\begin{equation} \label{eqn:huyghe_formula_two}
q = [ w_s w_t, w_s v_t + w_t v_s + v_s \times v_t ]
\end{equation}
Next, the author introduces a new quaternion $q_p$ (no origin is provided) which is "a projected version of the initial quaternion $q$ onto the twist axis" (\cite{huyghe})
\begin{align*}
q_p & = [w, (v \cdot u_t) u_t] \\
& = [w_s w_t, (w_s v_t \cdot u_t + w_t v_s \cdot u_t + (v_s \times v_t) \cdot u_t ) u_t ] \\
& = [w_s w_t, w_s \|u_t\|^2 \sin(\frac{\tau}{2}) u_t] \\
& = [w_s w_t, w_s v_t]
\end{align*}
Normalization of quaternion $q_p$ gives
\begin{equation*}
\frac{q_p}{\|q_p\|} = \frac{[w_s w_t, w_s v_t]}{\sqrt{w_s^2 w_t^2 + w_s^2 \|v_t\|^2}} = \frac{w_s [w_t, v_t]}{w_s \sqrt{w_t^2 + \|v_t\|^2}} = q_t
\end{equation*}
which in result turns out to be twist quaternion. Note that in the above equation Huyghe does not consider $w_s = 0$ which is a drawback. Remaining swing quaternion is calculated from
\begin{equation*}
q_s = q q_t^*
\end{equation*}
which completes the decomposition.

\section{Preliminaries}

The presented results relate to $\Cl_3 := \Cl_{3,0}(\R)$ - Clifford algebra of 3-dimensional real space. Herein, a spinor is given by $s = a + b \e_{12} + c \e_{23} + d \e_{31} \in \Spin(3)$ and a vector is given by $v = x \e_0 + y \e_1 + z \e_2$. A rotation of a vector is given by Clifford multiplication:
\begin{equation*}
v' = s v s^{-1}
\end{equation*}
for an arbitrary vector $v$ and a spinor $s$.

In this paper ${\square}^{-1}$ denotes the inverse of a spinor. In case of $\Spin(3)$ it is equivalent to conjugation:
\begin{equation*}
s^{-1} = \tilde{s} = a - b \e_{12} - c \e_{23} - d \e_{31}
\end{equation*}
A spinor can be written as a sum of its scalar and bivector parts:
\begin{equation*}
s = [s]_0 + [s]_2
\end{equation*}
where $\quad [s]_0 \in \bigwedge_0 \R^3$ and $[s]_2 \in \bigwedge_2 \R^3$. In several places the Hodge star $\star$ operator is used. In an orthonormal basis it is defined as
\begin{equation*}
\star (\e_1 \wedge \e_2 \wedge \cdots \wedge \e_k) = \e_{k+1} \wedge \e_{k+2} \wedge \cdots \wedge \e_n
\end{equation*}
but in this paper it is used only in relation to the bivector part of a spinor. In this case the formula can be simplified to:
\begin{equation*}
\star [s]_2 = -\e_{123} [s]_2 \in \bigwedge_1 \R^3
\end{equation*}
A convenient notation is used for normalized vectors. For a given non-zero vector $v \ne 0$, the \textbf{normalization function} is defined as:
\begin{equation*}
\N(v) = \frac{v}{\|v\|}
\end{equation*}
where $\|v\| = \sqrt{v \cdot v} = \sqrt{v v}$ is the length of vector $v$. Note that it is impossible to define a normalized zero vector. Let $s$ be a spinor and $v$ be a vector in $\Cl_3$. The rotation $s v s^{-1}$ of the vector $v$ by the spinor $s$ gives a rotated vector $v' = x' \e_0 + y' \e_1 + z' \e_2$ which is equal to:
\begin{align} \label{eqn:spinor_rotation_formula}
\left\{
\begin{aligned}
& x' = (a^2 - b^2 + c^2 - d^2) x + 2 y (a b + c d) + 2 z (b c - a d) \\
& y' = (a^2 - b^2 - c^2 + d^2) y + 2 x (c d - a b) + 2 z (b d + a c) \\
& z' = (a^2 + b^2 - c^2 - d^2) z + 2 x (b c + a d) + 2 y (b d - a c)
\end{aligned}
\right.
\end{align}

This paper if organized as follows: first we obtain the complete set of spinors which do not rotate a given non-zero vector. Next we derive a spinor which rotates a given initial vector to cover a given terminal vector. By combining these two formulas we derive a formula which represents an arbitrary rotation by initial and terminal vectors (swing factor) together with axis rotation (twist factor). Finally swing-twist representation is inversed for an arbitrary rotation and a unique swing-twist decomposition of a spinor is obtained. In conclusion some applications of the proposed decomposition are presented.

\section{The set of spinors which do not rotate a given vector}

In this section we assume that $v$ is a non-zero vector. The set of all spinors $s \in \Spin(3)$ which do not rotate the given vector $v$ will be called an invariant set of spinors for a given vector. The following theorem will be proved:
\begin{proposition}[The set of spinors which do not rotate a given vector] \label{thm:proposition_twist_formula}
Let $v$ be a non-zero vector in $\Cl_3$. The complete set of spinors $s$ such that $s v s^{-1} = v$ is determined by:
\begin{equation*}
s = \exp(\e_{123} \alpha \N(v))
\end{equation*}
for all $\alpha \in [0; 2 \pi)$. For a given element $a \in \Cl_3$ exponential of $a$ is defined as:
\begin{equation*}
\exp(a) := \sum_{k=0}^\infty \frac{a^k}{k!}
\end{equation*}
\end{proposition}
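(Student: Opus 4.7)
The strategy is to reduce the invariance condition $svs^{-1}=v$ to the commutator equation $sv = vs$ and then classify such spinors by a grade-by-grade analysis. First observe that the bivector $B_v := \e_{123}\,\N(v)$ squares to $-1$: indeed $\e_{123}$ is central in $\Cl_3$ with $\e_{123}^2=-1$, and $\N(v)^2 = 1$, so $B_v^2 = \e_{123}^2\,\N(v)^2 = -1$. The exponential series therefore collapses to
\[ \exp(\alpha B_v) = \cos(\alpha) + \sin(\alpha)\,B_v, \]
and the proposition reduces to showing that the spinors commuting with $v$ are precisely the elements of this one-parameter family.

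The easy inclusion is immediate: $\e_{123}$ is central and two parallel vectors commute in $\Cl_3$ (their wedge vanishes, so $\N(v)\,v = v\,\N(v)$), hence $B_v$ commutes with $v$, and therefore so does every $s = \cos(\alpha) + \sin(\alpha)\,B_v$. For the converse I would write an arbitrary spinor as $s = a + B$ with scalar $a$ and bivector $B$, and compute
\[ sv - vs = Bv - vB = 2\,(B\cdot v), \]
using the standard grade splitting $Bv = B\cdot v + B\wedge v$ (grade $1$ plus grade $3$) together with the sign flip $vB = -B\cdot v + B\wedge v$. Every bivector in $\Cl_3$ can be written uniquely as $B = \e_{123}\,w$ for a vector $w$, and expanding $Bv = \e_{123}(w\cdot v) + \e_{123}(w\wedge v)$ identifies the grade-$1$ part of $Bv$ as $\e_{123}(w\wedge v)$. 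Thus $B\cdot v = 0$ iff $w\wedge v = 0$ iff $w = \mu\,\N(v)$ for some $\mu\in\R$.

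This forces $B = \mu\,B_v$, so $s = a + \mu\,B_v$, and imposing $s\tilde{s}=a^2 - B^2 = a^2 + \mu^2 = 1$ gives a unique $\alpha\in[0,2\pi)$ with $a=\cos\alpha$, $\mu=\sin\alpha$, yielding $s = \exp(\alpha\,B_v)$ as claimed. The one delicate step is the grade accounting in the converse: one must correctly pick out the grade-$1$ part of $\e_{123}\,wv$, remembering that the pseudoscalar exchanges grades $k$ and $3-k$, so it is the bivector component $w\wedge v$ (not the scalar $w\cdot v$) that contributes to $B\cdot v$.
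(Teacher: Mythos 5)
Your proof is correct, and it takes a genuinely different and considerably more conceptual route than the paper's. The paper works entirely in coordinates: it writes out $svs^{-1}=v$ as three polynomial equations in $(a,b,c,d)$ together with $a^2+b^2+c^2+d^2=1$, forms the auxiliary combination $xQ_1+yQ_2+zQ_3-(x^2+y^2+z^2)Q_4$ to extract the identities $bx=cz$, $dx=cy$, $by=dz$, then case-splits on $x\ne0$, $x=0$ with $y\ne0$, and $x=y=0$ to solve for $b,c,d$, and finally resums the cosine and sine series to recognize $\exp(\e_{123}\alpha\N(v))$. Your argument instead reduces immediately to the commutator equation $sv=vs$, splits $s=a+B$ by grade, identifies the obstruction as exactly $2\,(B\cdot v)=Bv-vB$, and uses the Hodge duality $B=\e_{123}w$ to turn $B\cdot v=0$ into the parallelism condition $w\wedge v=0$; the constraint $s\tilde s=a^2-B^2=a^2+\mu^2=1$ then yields the one-parameter family, and the observation $B_v^2=-1$ makes the exponential form a one-line collapse rather than a series resummation. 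What your approach buys is brevity, coordinate-freedom, and transparency of the underlying geometry (the stabilizer of $v$ in $\Spin(3)$ is the circle generated by the dual bivector of $v$); what the paper's approach buys is a fully elementary verification that never invokes the identities $B\cdot v=\tfrac12(Bv-vB)$ or $v\cdot B=-B\cdot v$, at the cost of substantial case analysis. The grade bookkeeping you flagged as the delicate point is indeed the crux, and you have it right: the grade-$1$ part of $\e_{123}\,wv$ is $\e_{123}(w\wedge v)$ because the pseudoscalar exchanges grades $k$ and $3-k$.
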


\begin{proof}
The proof will use the coordinate expansion of spinor rotation formula and the normalization identity giving the following set of equations:
\begin{align} \label{eqn:spinor_automorphism_set}
\left\{
\begin{aligned}
& (a^2 - b^2 + c^2 - d^2 -1) x + 2 y (a b + c d) + 2 z (b c - a d) = 0 \\
& (a^2 - b^2 - c^2 + d^2 -1) y + 2 x (c d - a b) + 2 z (b d + a c) = 0 \\
& (a^2 + b^2 - c^2 - d^2 -1) z + 2 x (b c + a d) + 2 y (b d - a c) = 0 \\
& a^2 + b^2 + c^2 + d^2 - 1 = 0
\end{aligned}
\right.
\end{align}
The key idea is to extract simple relations from \ref{eqn:spinor_automorphism_set} yet avoiding high order equations. Denote $Q_i$ as the left side of $i$th equation of \ref{eqn:spinor_automorphism_set}. A valid solution $(a, b, c, d)$ must satisfy the following equation:
\begin{equation} \label{eqn:spinor_automorphism_set_resolving_equation}
x Q_1 + y Q_2 + z Q_3 - (x^2 + y^2 + z^2) Q_4 = 0
\end{equation}
formula \eqref{eqn:spinor_automorphism_set_resolving_equation} can be expanded:
\begin{align*}
& (a^2 - b^2 + c^2 - d^2 - 1) x^2 + 2 x y (a b + c d) + 2 x z (b c - a d) \\
& + (a^2 - b^2 - c^2 + d^2 - 1) y^2 + 2 x y (c d - a b) + 2 y z (b d + a c) \\
& + (a^2 + b^2 - c^2 - d^2 - 1) z^2 + 2 x z (b c + a d) + 2 y z (b d - a c) \\
& - (x^2 + y^2 + z^2) (a^2 + b^2 + c^2 + d^2 - 1) = 0 \\
& (a^2 - b^2 + c^2 - d^2 - 1 - a^2 - b^2 - c^2 - d^2 + 1) x^2 + 2 x y c d + 2 x z b c \\
& + (a^2 - b^2 - c^2 + d^2 - 1 - a^2 - b^2 - c^2 - d^2 + 1) y^2 + 2 x y c d + 2 y z b d \\
& + (a^2 + b^2 - c^2 - d^2 - 1 - a^2 - b^2 - c^2 - d^2 + 1) z^2 + 2 x z b c + 2 y z b d = 0 \\
& (b^2 + d^2) x^2 + (b^2 + c^2) y^2 + (c^2 + d^2) z^2 - 2 x y c d - 2 x z b c - 2 y z b d = 0 \\
& (b x - c z)^2 + (d x - c y)^2 + (b y - d z)^2 = 0
\end{align*}
which implies that the three identities must hold:
\begin{equation} \label{eqn:spinor_automorphism_set_identities}
b x = c z, \quad d x = c y, \quad b y = d z
\end{equation}
Next, identities \eqref{eqn:spinor_automorphism_set_identities} are plugged into the first equation of \ref{eqn:spinor_automorphism_set} and simplified using the identity $a^2 + b^2 + c^2 + d^2 = 1$:
\begin{align*}
& 2 (a^2 + c^2 - 1) x + 2 y a b + 2 y c d + 2 z b c - 2 z a d = 0 \\
& 2 (a^2 + c^2 - 1) x + 2 z a d + 2 y c d + 2 z b c - 2 z a d = 0 \\
& (a^2 + c^2 - 1) x + y c d + z b c = 0
\end{align*}
When $x \ne 0$ there is
\begin{align}
& (a^2 + c^2 - 1) x + y c d + z b c = 0 \notag \\
& (a^2 + c^2 - 1) x + y c \frac{c y}{x} + z \frac{c z}{x} c = 0 \notag \\
& (a^2 + c^2 - 1) x^2 + y^2 c^2 + z^2 c^2 = 0 \notag \\
& c^2 (x^2 + y^2 + z^2) = x^2 (1 - a^2) \notag \\
& c = \sigma x \frac{\sqrt{1 - a^2}}{\sqrt{x^2 + y^2 + z^2}} \label{eqn:spinor_automorphism_set_general_c}
\end{align}
for $\sigma \in \{-1, 1\}$. In the other case, when $x = 0$ is is easy to observe from \eqref{eqn:spinor_automorphism_set_identities} that
\begin{align*}
c y = 0, \quad c z = 0
\end{align*}
plugging the above to \eqref{eqn:spinor_automorphism_set}:
\begin{align} \label{eqn:spinor_automorphism_set_extract_for_z_0}
\left\{
\begin{aligned}
& y a b - z a d = 0 \\
& (a^2 - b^2 - c^2 + d^2 -1) y + 2 z b d = 0 \\
& (a^2 + b^2 - c^2 - d^2 -1) z + 2 y b d = 0 \\
& a^2 + b^2 + c^2 + d^2 - 1 = 0
\end{aligned}
\right.
\end{align}
replace in the second and the third equation of \eqref{eqn:spinor_automorphism_set_extract_for_z_0} with $b y = d z$:
\begin{align} \label{eqn:spinor_automorphism_set_extract_for_z_0_simplified}
& \left\{
\begin{aligned}
& (a^2 - b^2 - c^2 + d^2 -1) y + 2 b^2 y = 0 \\
& (a^2 + b^2 - c^2 - d^2 -1) z + 2 d^2 z = 0
\end{aligned}
\right. \\
& \left\{
\begin{aligned}
& (a^2 + b^2 - c^2 + d^2 -1) y = 0 \\
& (a^2 + b^2 - c^2 + d^2 -1) z = 0
\end{aligned}
\right.
\end{align}
it is impossible that both $y = 0$ and $z = 0$ since in this case $x = 0$ and $\|v\| \ne 0$ by assumption. Hence, from any of the above equations there must be:
\begin{equation*}
a^2 + b^2 - c^2 + d^2 -1 = 0
\end{equation*}
subtracting from both sides of this equation the fourth equation of \eqref{eqn:spinor_automorphism_set_extract_for_z_0} one gets immediately:
\begin{align} \label{eqn:spinor_automorphism_set_when_x_0_then_c_0}
& (a^2 + b^2 - c^2 + d^2 -1) - (a^2 + b^2 + c^2 + d^2 - 1) = -2 c^2 = 0 \\
& \Longrightarrow c = 0
\end{align}
which is also covered by the general solution \eqref{eqn:spinor_automorphism_set_general_c} thus it can be assumed that \eqref{eqn:spinor_automorphism_set_general_c} is the only solution. Remaining spinor components are derived as follows. When $x \ne 0$ one writes using identities \eqref{eqn:spinor_automorphism_set_identities}:
\begin{align}
b & = \frac{z}{x} \sigma x \frac{\sqrt{1 - a^2}}{\sqrt{x^2 + y^2 + z^2}} = \sigma z \frac{\sqrt{1 - a^2}}{\sqrt{x^2 + y^2 + z^2}} \label{eqn:spinor_automorphism_set_general_b} \\
d & = \frac{y}{x} \sigma x \frac{\sqrt{1 - a^2}}{\sqrt{x^2 + y^2 + z^2}} = \sigma y \frac{\sqrt{1 - a^2}}{\sqrt{x^2 + y^2 + z^2}} \label{eqn:spinor_automorphism_set_general_d}
\end{align}
in the case when $x = 0$ from \eqref{eqn:spinor_automorphism_set_when_x_0_then_c_0} there is $c = 0$. To calculate $b$ and $d$ one rewrites \eqref{eqn:spinor_automorphism_set_extract_for_z_0}:
\begin{align} \label{eqn:spinor_automorphism_set_x_0}
\left\{
\begin{aligned}
& y a b - z a d = 0 \\
& (a^2 - b^2 + d^2 -1) y + 2 z b d = 0 \\
& (a^2 + b^2 - d^2 -1) z + 2 y b d = 0 \\
& a^2 + b^2 + d^2 - 1 = 0
\end{aligned}
\right.
\end{align}
Consider the case $y \ne 0$ then from \eqref{eqn:spinor_automorphism_set_identities} there is $b = \frac{z}{y} d$. Plugging this identity into the fourth equation of \eqref{eqn:spinor_automorphism_set_x_0} one obtains:
\begin{align*}
& a^2 - 1 + (\frac{z^2}{y^2} + 1) d^2 = 0 \\
& (a^2 - 1) y^2 + (x^2 + y^2 + z^2) d^2 = 0 \\
& d = \sigma y \frac{\sqrt{1 - a^2}}{\sqrt{x^2 + y^2 + z^2}}
\end{align*}
and from $b = \frac{z}{y} d$ there is:
\begin{equation*}
b = \frac{z}{y} \sigma y \frac{\sqrt{1 - a^2}}{\sqrt{x^2 + y^2 + z^2}} = \sigma z \frac{\sqrt{1 - a^2}}{\sqrt{x^2 + y^2 + z^2}}
\end{equation*}
In the case when $y = 0$ there must be $z \ne 0$ since $\|v\| \ne 0$. Thus, from \eqref{eqn:spinor_automorphism_set_identities} there is $d = \frac{y}{z} b$. Plugging this identity into the fourth equation of \eqref{eqn:spinor_automorphism_set_x_0} one obtains:
\begin{align*}
& a^2 - 1 + (\frac{y^2}{z^2} + 1) b^2 = 0 \\
& (a^2 - 1) z^2 + (x^2 + y^2 + z^2) b^2 = 0 \\
& b = \sigma z \frac{\sqrt{1 - a^2}}{\sqrt{x^2 + y^2 + z^2}}
\end{align*}
and from $b = \frac{z}{y} d$ there is:
\begin{equation*}
d = \frac{y}{z} \sigma z \frac{\sqrt{1 - a^2}}{\sqrt{x^2 + y^2 + z^2}} = \sigma y \frac{\sqrt{1 - a^2}}{\sqrt{x^2 + y^2 + z^2}}
\end{equation*}
In all cases a general solution to \eqref{eqn:spinor_automorphism_set} in coordinates is:
\begin{align} \label{eqn:spinor_rotation_formula_solution_in_coordinates}
& a \in [-1; 1], \quad \sigma \in \{ -1, 1 \} \\
& b = \sigma z \frac{\sqrt{1 - a^2}}{\sqrt{x^2 + y^2 + z^2}}, \quad c = \sigma x \frac{\sqrt{1 - a^2}}{\sqrt{x^2 + y^2 + z^2}}, \quad d = \sigma y \frac{\sqrt{1 - a^2}}{\sqrt{x^2 + y^2 + z^2}}
\end{align}
There are two parametrized solutions to the set of equations \eqref{eqn:spinor_automorphism_set}:
\begin{align*}
s = & a + \sigma z \frac{\sqrt{1 - a^2}}{\sqrt{x^2 + y^2 + z^2}} e_{12} + \\
    & \sigma x \frac{\sqrt{1 - a^2}}{\sqrt{x^2 + y^2 + z^2}} e_{23} + \sigma y \frac{\sqrt{1 - a^2}}{\sqrt{x^2 + y^2 + z^2}} e_{31} \\
s = & a + \sigma \frac{\sqrt{1 - a^2}}{\sqrt{x^2 + y^2 + z^2}} (z e_{12} + x e_{23} + y e_{31}) \\
s = & a + \sigma \frac{\sqrt{1 - a^2}}{\sqrt{x^2 + y^2 + z^2}} \e_{123} (z e_3 + x e_1 + y e_2) \\
s = & a + \sigma \frac{\sqrt{1 - a^2}}{\|v\|} \e_{123} v \\
s = & a + \sigma \e_{123} \sqrt{1 - a^2} \N(v)
\end{align*}
Finally, since many of the operations were reductions, all solutions are plugged into the original set of equations to check their validity. It can be seen that:
\begin{align*}
s v s^{-1} & = (a + \sigma \e_{123} \sqrt{1-a^2} \N(v)) v (a + \sigma \e_{123} \sqrt{1-a^2} \N(v))^{-1} \\
& = (a + \sigma \e_{123} \sqrt{1-a^2} \N(v)) v (a - \sigma \e_{123} \sqrt{1-a^2} \N(v)) \\
& = (a v + \sigma \e_{123} \sqrt{1-a^2} \N(v) v) (a - \sigma \e_{123} \sqrt{1-a^2} \N(v)) \\
& = (a v + \sigma \e_{123} \sqrt{1-a^2} \|v\|) (a - \sigma \e_{123} \sqrt{1-a^2} \N(v)) \\
& = a v a + a v (- \sigma \e_{123} \sqrt{1-a^2} \N(v)) + (\sigma \e_{123} \sqrt{1-a^2} \|v\|) a + \\
& \quad (\sigma \e_{123} \sqrt{1-a^2} \|v\|) (-\sigma \e_{123} \sqrt{1-a^2} \N(v)) \\
& = a v a + a v (-\sigma \e_{123} \sqrt{1-a^2} \N(v)) + a v (\sigma \e_{123} \sqrt{1-a^2} \N(v)) + \\
& \quad \sigma (- \sigma) \e_{123} \e_{123} \sqrt{1-a^2} \sqrt{1-a^2} \|v\| \N(v) \\
& = a v a + (1 - a^2) v = a^2 v + v - a^2 v = v
\end{align*}
which confirms that all the solutions are valid. Now we obtain the following formula:
\begin{equation*}
s = a + \sigma \e_{123} \sqrt{1-a^2} \N(v) 
\end{equation*}
for $a \in [-1; 1]$. Now it is further simplified by using trigonometric and exponential series converging for all arguments. We substitute $a = \cos(\alpha)$ to obtain:
\begin{equation*}
a + \sigma \e_{123} \sqrt{1-a^2} \N(v) = \cos(\alpha) + \sin(\alpha) \e_{123} \N(v)
\end{equation*}
where $\alpha \in [0; 2 \pi)$. Then, we express the trigonometry using exponential function. We use series expansions which are valid for all $\alpha$. Note that $(-1)^k = (\e_{123} \e_{123})^k = (\e_{123})^{2k}$ and $(\N(v))^{2 k} = 1$.
\begin{align*}
& \cos(\alpha) + \sin(\alpha) \e_{123} \N(v) = \sum_{k=0}^\infty \frac{(-1)^k}{(2 k)!} \alpha^{2 k} + \e_{123} \N(v) \sum_{k=0}^\infty \frac{(-1)^k}{(2 k + 1)!} \alpha^{2 k + 1} \\
& = \sum_{k=0}^\infty \frac{(\e_{123})^{2k}}{(2 k)!} \alpha^{2 k} (\N(v))^{2 k} + \e_{123} \N(v) \sum_{k=0}^\infty \frac{(\e_{123})^{2k}}{(2 k + 1)!} \alpha^{2 k + 1} (\N(v))^{2 k} \\
& = \sum_{k=0}^\infty \frac{(\e_{123})^{2k}}{(2 k)!} \alpha^{2 k} (\N(v))^{2 k} + \sum_{k=0}^\infty \frac{(\e_{123})^{2k + 1}}{(2 k + 1)!} \alpha^{2 k + 1} (\N(v))^{2 k + 1} \\
& = \sum_{k=0}^\infty \frac{(\e_{123} \alpha \N(v))^{2k}}{(2 k)!} + \sum_{k=0}^\infty \frac{(\e_{123} \alpha \N(v))^{2k + 1}}{(2 k + 1)!} \\
& = \sum_{k=0}^\infty \frac{(\e_{123} \alpha \N(v))^{k}}{k!} = \exp(\e_{123} \alpha \N(v))
\end{align*}
which is precisely the stated general formula. This completes the proof.
\end{proof}

\section{A spinor which rotates a given vector}

The following lemma states what is a spinor which rotates a given vector to be equal to a target one. In literature it is sometimes called \textit{direct rotation}, as in \cite{baerlocher_boulic}. Schematic view is presented in figure \ref{img:direct_rotation}.
\begin{figure}[ht]
\centering
\includegraphics[width=3cm]{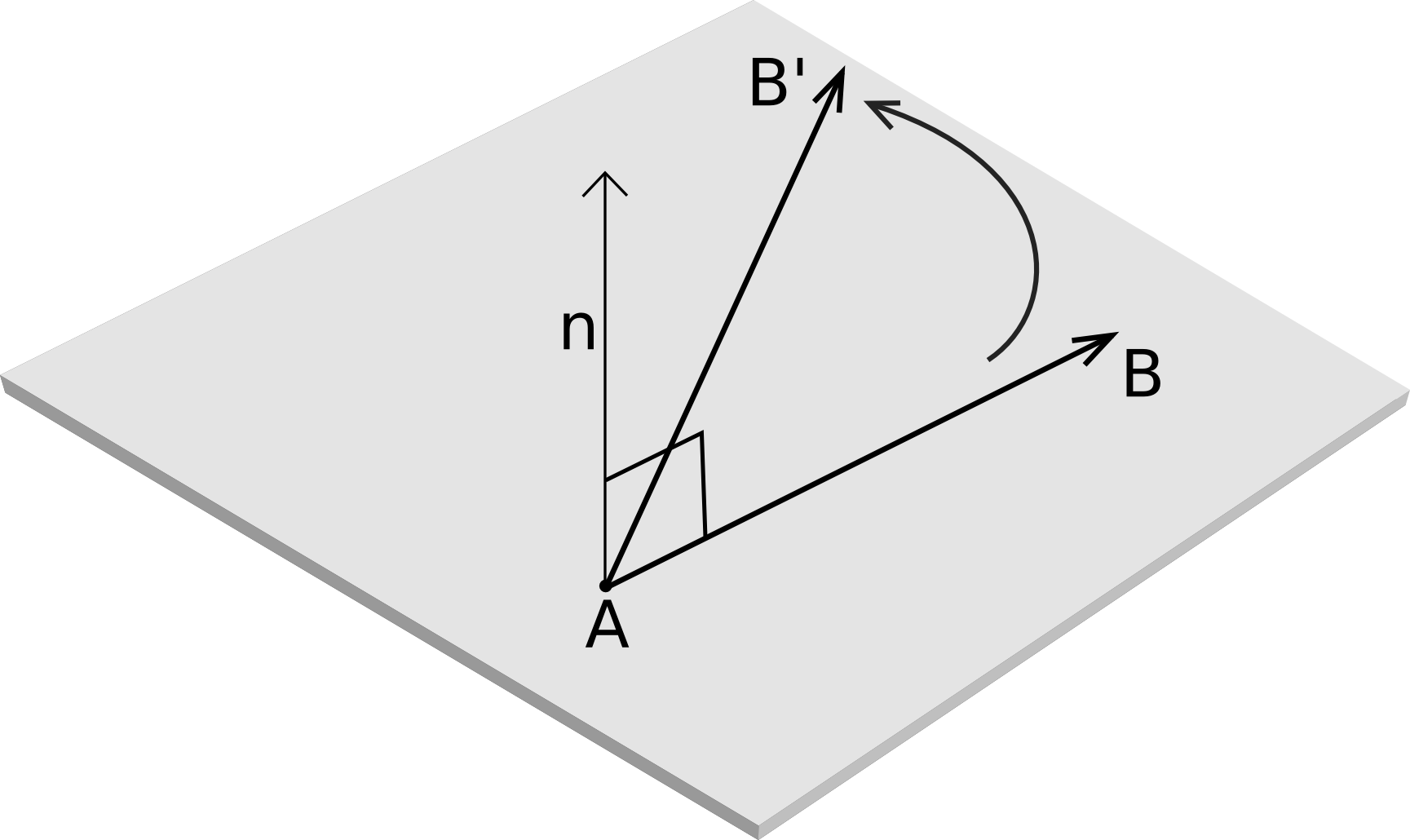}
\caption{A direct rotation of $AB$ to $AB'$}
\label{img:direct_rotation}
\end{figure}
The lemma is required to prove a general theorem about the set of spinors which rotate a given initial vector to a given target vector. We start with the following initial lemma which applies to the Clifford algebra of $\R^2$. At first the following basic property of $\Cl_2$ is proved:
\begin{lemma} \label{thm:lemma_external_product}
The external product of two given vectors $v$ and $w$ in $\Cl_2$ can be written as:
\begin{equation*}
w \wedge v = \frac{1}{2} \sgn(v \wedge w \e_{12}) \|v - w\| \|v + w\|\e_{12}
\end{equation*}
\end{lemma}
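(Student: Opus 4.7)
My approach is to avoid coordinate grinding and work directly with the Clifford product. The key algebraic identity to build on is
\[
(v-w)(v+w) \;=\; v^2 + vw - wv - w^2 \;=\; \|v\|^2 - \|w\|^2 \;+\; 2\,(v\wedge w),
\]
which follows from $uv + vu = 2(u\cdot v)$ and $uv - vu = 2(u\wedge v)$ for vectors in $\Cl_2$. In the application envisaged (the next section constructs a spinor that moves an initial vector onto a terminal one) the two arguments have the same length, so I would assume $\|v\|=\|w\|$; then the scalar part drops out and $(v-w)(v+w) = 2\,(v\wedge w) = -2\,(w\wedge v)$ is a pure bivector.

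To recover the promised norms, I would invoke the multiplicativity of the Clifford magnitude on products of vectors, $|uu'| = \|u\|\,\|u'\|$ for $u, u' \in \bigwedge_1 \R^2$, a one-line check using reversion: $(uu')\widetilde{(uu')} = uu'u'u = \|u'\|^2 \|u\|^2$. Applied with $u := v-w$ and $u' := v+w$, this yields
\[
\|v-w\|\,\|v+w\| \;=\; \bigl|(v-w)(v+w)\bigr| \;=\; 2\,|\alpha|,
\]
where I write $v\wedge w = \alpha\,\e_{12}$. All that remains is a sign computation. Since $\e_{12}^2 = -1$, the Clifford product $(v\wedge w)\,\e_{12}$ equals $-\alpha$, hence $\sgn\bigl((v\wedge w)\,\e_{12}\bigr) = -\sgn(\alpha)$, while $w\wedge v = -\alpha\,\e_{12}$. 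Substituting,
\[
\tfrac{1}{2}\,\sgn\bigl((v\wedge w)\,\e_{12}\bigr)\,\|v-w\|\,\|v+w\|\,\e_{12} \;=\; \tfrac{1}{2}\,(-\sgn\alpha)\,(2|\alpha|)\,\e_{12} \;=\; -\alpha\,\e_{12} \;=\; w\wedge v,
\]
as required.

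The main conceptual step is spotting that $\|v-w\|\,\|v+w\|$ is naturally the Clifford magnitude of the single product $(v-w)(v+w)$, which then telescopes to $2\,(v\wedge w)$ with no expansion in coordinates. The main obstacle I anticipate is the hidden hypothesis $\|v\|=\|w\|$: without it, the scalar residue $\|v\|^2-\|w\|^2$ survives in $(v-w)(v+w)$ and the magnitudes no longer match, so I would either record this hypothesis explicitly or present the lemma conditionally. A purely coordinate-based alternative is to set $\alpha := w_1 v_2 - w_2 v_1$ and combine Lagrange's identity $\alpha^2 = \|v\|^2\|w\|^2 - (v\cdot w)^2$ with the difference-of-squares factorization $\|v\pm w\|^2 = \|v\|^2+\|w\|^2 \pm 2\,v\cdot w$; this route makes the same equal-norm collapse visible but is less illuminating.
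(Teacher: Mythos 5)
Your proof is correct and takes a genuinely different route from the paper's. The paper grinds through coordinates, setting $v = x\e_1 + y\e_2$ and $w = p\e_1 + q\e_2$ and checking that both sides collapse to $(py-qx)\e_{12}$. You instead recognize $\|v-w\|\,\|v+w\|$ as the Clifford magnitude of a single product, factor $(v-w)(v+w) = \|v\|^2 - \|w\|^2 + 2(v\wedge w)$, and read off the result after an elementary sign check. Your route is shorter, dimension-agnostic, and makes the structure transparent; the paper's is a brute verification that happens to close.

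More importantly, you correctly identify that the lemma needs the hypothesis $\|v\| = \|w\|$, and the paper's own proof is in fact wrong without it. The step where the paper replaces
\begin{equation*}
\sqrt{\bigl((x-p)^2+(y-q)^2\bigr)\bigl((x+p)^2+(y+q)^2\bigr)} \quad\text{by}\quad \sqrt{4(py-qx)^2}
\end{equation*}
is not an identity: the left radicand equals $(x^2+y^2+p^2+q^2)^2 - 4(xp+yq)^2$, and by Lagrange's identity this agrees with $4(py-qx)^2$ precisely when $(x^2+y^2)-(p^2+q^2)=0$. A concrete counterexample is $v=\e_1$, $w=2\e_2$, where $w\wedge v = -2\e_{12}$ but the claimed right-hand side evaluates to $-\tfrac{5}{2}\e_{12}$. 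The lemma is only ever applied in the proof of the subsequent rotation lemma, where $\|v\|=\|w\|$ is assumed, so the downstream results are unaffected, but the lemma as stated omits a necessary hypothesis and the paper's proof smuggles it in silently. Your version, which records the hypothesis and shows exactly where it enters (the scalar residue $\|v\|^2-\|w\|^2$ surviving otherwise), is the correct formulation.
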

\begin{proof}
Assume that $v = x \e_1 + y \e_2$ and $w = p \e_1 + q \e_2$. In special case when $w = \pm v$ both sides of the equation are equal to zero therefore the equation holds. In general case $w \ne \pm v$ and one can rewrite both sides in coordinates. Left-hand side is equal to:
\begin{align*}
w \wedge v & = \frac{1}{2} ( w v - v w ) = \frac{1}{2} \left( (p \e_1 + q \e_2)(x \e_1 + y \e_2) - (x \e_1 + y \e_2)(p \e_1 + q \e_2) \right) \\
& = \frac{1}{2} \left( 2 p y \e_{12} - 2 q x \e_{12} \right) = (p y - q x) \e_{12}
\end{align*}
Right-hand side is equal to:
\begin{align*}
& \frac{1}{2} \sgn(v \wedge w \e_{12}) \|v - w\| \|v + w\|\e_{12} \\
& = \frac{1}{2} \sgn(\frac{1}{2} ( v w - w v ) \e_{12}) \sqrt{(v - w)^2} \sqrt{(v + w)^2} \e_{12} \\
& = \frac{1}{2} \sgn(( (x \e_1 + y \e_2)(p \e_1 + q \e_2) \\
& \quad - (p \e_1 + q \e_2)(x \e_1 + y \e_2) ) \e_{12}) \sqrt{(v - w)^2 (v + w)^2 } \e_{12} \\
& = \frac{1}{2} \sgn(( 2 q x \e_{12} - 2 p y \e_{12} ) \e_{12}) \\
& \quad \sqrt{ ((x - p)^2 + (y - q)^2)((x+p)^2+(y+q)^2)} \e_{12} \\
& = \frac{1}{2} \sgn(p y - q x) \sqrt{ 4 (p y - q x)^2 } \e_{12} = \sgn(p y - q x) |p y - q x| \e_{12} \\
& = (p y - q x) \e_{12}
\end{align*}
Both sides are equal so it completes the proof.
\end{proof}

\begin{lemma}
Let $v$ and $w$ be a pair of non-zero vectors of equal length in $\Cl_2$ such that $v \ne -w$. Then there exist a pair of spinors $\pm s \in \Spin(2)$ which rotate $v$ so that the rotated vector is equal to $w$:
\begin{equation*}
s v s^{-1} = w
\end{equation*}
then the pair of spinors is equal to:
\begin{equation*}
s = \pm \N(v + w) \N(v)
\end{equation*}
\end{lemma}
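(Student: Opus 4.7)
The plan is to verify the proposed formula by a direct Clifford multiplication and then to establish uniqueness via a two-dimensional analog of Proposition \ref{thm:proposition_twist_formula}.

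For existence, I would set $e := \N(v)$ and $u := \N(v+w)$; both are unit vectors, with $u$ well-defined precisely because the hypothesis $v \ne -w$ ensures $v+w \ne 0$. Writing $s = ue$, the reverse is $\tilde{s} = eu$, and $s\tilde{s} = u e^2 u = u^2 = 1$ forces $s^{-1} = eu$. Therefore
\begin{equation*}
  svs^{-1} \;=\; u\,(eve)\,u.
\end{equation*}
Since $e$ is parallel to $v$, the inner factor collapses at once: $eve = v$, directly from $e = v/\|v\|$ and $v^2 = \|v\|^2$. The remaining step is where the geometric content lies; I would reduce $uvu = w$ to the equivalent identity
\begin{equation*}
  (v+w)\,v\,(v+w) \;=\; \|v+w\|^2\, w,
\end{equation*}
and verify it by expanding the left side using only $v^2 = \|v\|^2$, the symmetrization $vw + wv = 2\,(v\cdot w)$, and the assumption $\|v\| = \|w\|$. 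Everything collapses to $\bigl(2\|v\|^2 + 2\,v\cdot w\bigr)\,w$, which matches $\|v+w\|^2\, w$.

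For uniqueness, suppose $s_1, s_2 \in \Spin(2)$ both rotate $v$ to $w$. Then $t := s_2^{-1}s_1$ fixes $v$ under conjugation. Writing $t = a + b\,\e_{12}$ with $a^2 + b^2 = 1$ and mimicking the coordinate expansion of Proposition \ref{thm:proposition_twist_formula} in two dimensions, the requirement $tvt^{-1} = v$ for the given nonzero $v$ reduces to $a^2 - b^2 = 1$ together with $ab = 0$, and combined with the unit-norm constraint these force $t = \pm 1$. Hence $s_1 = \pm s_2$, exhibiting exactly the sign ambiguity claimed.

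The main obstacle I anticipate is the algebraic identity $(v+w)v(v+w) = \|v+w\|^2 w$. Geometrically this merely says that reflection through the bisector line $\N(v+w)$ interchanges the equal-length vectors $v$ and $w$, but I would prefer to carry out the expansion directly from the two identities above rather than invoke Lemma \ref{thm:lemma_external_product}, which concerns the bivector part and is not tailored to this particular conjugation.
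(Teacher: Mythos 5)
Your proof is correct and takes a genuinely different route from the paper's. The paper proves this lemma by writing $s = a + b\e_{12}$, $v = x\e_1 + y\e_2$, $w = p\e_1 + q\e_2$, expanding $svs^{-1} = w$ into a coupled system in $(a,b)$, solving it (with separate sub-cases for $x \ne 0$ and $y \ne 0$), and only at the very end re-assembling the coordinate solution into the coordinate-free form $\pm\N(v+w)\N(v)$ with the aid of Lemma~\ref{thm:lemma_external_product}. You instead verify the closed form directly, reducing $svs^{-1} = w$ to the single identity $(v+w)\,v\,(v+w) = \|v+w\|^2\,w$, which falls out of $v^2 = \|v\|^2 = \|w\|^2 = w^2$ and $vw + wv = 2\,v\cdot w$ in a few lines, and then you handle uniqueness separately by showing that any spinor in $\Spin(2)$ fixing a nonzero vector under conjugation is $\pm 1$. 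Your argument is shorter, coordinate-free, avoids Lemma~\ref{thm:lemma_external_product} entirely, and makes the existence/uniqueness split explicit, whereas the paper's system-solving approach delivers both at once but at the cost of case analysis and heavier algebra. In fact your verification is close in spirit to the paper's own proof of the $n$-dimensional generalization (Proposition~\ref{thm:proposition_swing_formula}), which also works by manipulating $(v+w)vv = w(v+w)v$; you have effectively noticed that the general argument already suffices in $\Cl_2$ and supplemented it with the uniqueness half, which Proposition~\ref{thm:proposition_swing_formula} does not address. One small point worth making explicit: $s = \N(v+w)\N(v)$ is a product of two vectors, hence an even element, and together with your check $s\tilde s = 1$ this confirms $s \in \Spin(2)$; you state the norm computation but it would be cleaner to also remark that the product of two vectors lands in the even subalgebra.
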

\begin{proof}
Let $s = a + b \e_{12} \in \Spin(2)$ be a spinor and $v = x \e_1 + y \e_2$, $w = p \e_1 + q \e_2$ be a pair of given vectors. The equation can be expressed in coordinates with the following set of equations:
\begin{align} \label{eqn:rotating_spinor_lemma_initial_set}
\left\{
\begin{aligned}
& (a^2 - b^2) x + 2 a b y = p \\
& (a^2 - b^2) y - 2 a b x = q
\end{aligned}
\right.
\end{align}
From the assumption it is impossible that both $x$ and $y$ are simultaneously zero. When $x \ne 0$ we solve the set of equations:
\begin{align*}
\left\{
\begin{aligned}
& a^2 - b^2 = \frac{p - 2 a b y}{x}\\
& \frac{p - 2 a b y}{x} y - 2 a b x = q
\end{aligned}
\right.
\end{align*}
The second equation is then simplified:
\begin{align*}
& (p - 2 a b y) y - 2 a b x^2 - q x = 0 \\
& 2 a b = \frac{p y - q x}{x^2 + y^2}
\end{align*}
Which is then plugged into the first equation of \eqref{eqn:rotating_spinor_lemma_initial_set}:
\begin{align*}
& (a^2 - b^2) x + \frac{p y - q x}{x^2 + y^2} y = p \\
& (2 a^2 - 1) x = p - \frac{p y^2 - q x y}{x^2 + y^2} \\
& 2 a^2 = \frac{p x^2 + p y^2 - p y^2 + q x y}{x (x^2 + y^2)} + 1 = \frac{p x + q y}{x^2 + y^2} + 1 \\
& a = \pm \sqrt{\frac{x^2 + y^2 + p x + q y}{x^2 + y^2} }
\end{align*}
The same result can be obtained when we consider the other case $y \ne 0$:
\begin{align*}
\left\{
\begin{aligned}
& a^2 - b^2 = \frac{q + 2 a b x}{y}\\
& \frac{q + 2 a b x}{y} x + 2 a b y = p
\end{aligned}
\right.
\end{align*}
The second equation can be simplified:
\begin{align*}
& (q + 2 a b x) x + 2 a b y^2 - p y = 0 \\
& 2 a b = \frac{p y - q x}{x^2 + y^2}
\end{align*}
Which is then plugged into the second equation of \eqref{eqn:rotating_spinor_lemma_initial_set}:
\begin{align*}
& (a^2 - b^2) y - \frac{p y - q x}{x^2 + y^2} x = q \\
& (2 a^2 - 1) y = q + \frac{p x y - q x^2}{x^2 + y^2} \\
& 2 a^2 = \frac{q x^2 + q y^2 + p x y - q x^2}{y (x^2 + y^2)} + 1 = \frac{p x + q y}{x^2 + y^2} + 1 \\
& a = \pm \sqrt{\frac{x^2 + y^2 + p x + q y}{2 (x^2 + y^2)} }
\end{align*}

We obtained one general solution which is correct for all $x, y$. Now, for each $\sigma \in \{ -1, 1 \}$ of $a = \sigma \sqrt{\frac{x^2 + y^2 + p x + q y}{2 (x^2 + y^2)} }$ we have exactly one corresponding solution $b$. It can be calculated by plugging it into $2 a b = \frac{p y - q x}{2 (x^2 + y^2)}$ valid for all $x, y$. When $a \ne 0$ then:
\begin{align*}
& b = \frac{p y - q x}{2 a (x^2 + y^2)} \\
& b = \frac{(p y - q x) \sqrt{2 (x^2 + y^2)}}{2 \sigma \sqrt{x^2 + y^2 + p x + q y} (x^2 + y^2)} \\
& b = \sigma \frac{p y - q x}{\sqrt{x^2 + y^2 + p x + q y} \sqrt{2 (x^2 + y^2)}} \\
& b = \sigma \frac{(p y - q x)\sqrt{x^2 + y^2 - p x - q y}}{\sqrt{(x^2 + y^2)^2 - (p x + q y)^2} \sqrt{2 (x^2 + y^2)}} \\
& b = \sigma \frac{(p y - q x)\sqrt{x^2 + y^2 - p x - q y}}{\sqrt{(x^2 + y^2)^2 - (p x + q y)^2} \sqrt{2 (x^2 + y^2)}} \\
& b = \sigma \frac{(p y - q x)\sqrt{x^2 + y^2 - p x - q y}}{\sqrt{x^4 + y^4 + 2 x^2 y^2 - p^2 x^2 - q^2 y^2 - 2 p q x y} \sqrt{2 (x^2 + y^2)}} \\
& b = \sigma \frac{(p y - q x)\sqrt{x^2 + y^2 - p x - q y}}{\sqrt{x^4 + y^4 + 2 x^2 y^2 - (x^2 + y^2 - q^2) x^2 - (x^2 + y^2 - p^2) y^2 - 2 p q x y}} \cdot \\
& \frac{1}{\sqrt{2 (x^2 + y^2)}} \\
& b = \sigma \frac{p y - q x}{\sqrt{(p y - q x)^2}} \sqrt{\frac{x^2 + y^2 - p x - q y}{2 (x^2 + y^2)}}
\end{align*}
note that $(p y - q x) \e_{12} = w \wedge v$ but it is assumed that $v$ is not parallel to $w$ therefore $p y - q x \ne 0$. Since $\e_{12}^2 = -1$ the following applies:
\begin{equation*}
\frac{p y - q x}{\sqrt{(p y - q x)^2}} = \sgn(v \wedge w \e_{12})
\end{equation*}
and $b$ is equal to:
\begin{equation*}
b = \sigma \sgn(v \wedge w \e_{12})\sqrt{\frac{x^2 + y^2 - p x - q y}{2 (x^2 + y^2)}}
\end{equation*}

When $a = 0$ then from $b^2 = 1 - a^2$ we know that $b = \pm 1$ which is also a special case of the general solution.
Finally, we simplify the formula to a coordinate-free solution. For each $\sigma \in \{ -1, 1 \}$ there is a solution:
\begin{align*}
& s = \sigma \sqrt{\frac{x^2 + y^2 + p x + q y}{2 (x^2 + y^2)} } + \sigma \sgn(v \wedge w \e_{12})\sqrt{\frac{x^2 + y^2 - p x - q y}{2 (x^2 + y^2)}} \e_{12} \notag \\
& s = \frac{\sigma}{\sqrt{2 (x^2 + y^2)}} ( \sqrt{x^2 + y^2 + p x + q y} + \\
& \sgn(v \wedge w \e_{12}) \sqrt{x^2 + y^2 - p x - q y} \e_{12} ) \\
& s = \frac{\sigma}{\sqrt{2 (x^2 + y^2)}} ( \sqrt{\frac{1}{2} (x^2 + y^2 + p^2 + q^2 + 2 (p x + q y) )} + \\
& \sgn(v \wedge w \e_{12}) \sqrt{\frac{1}{2} (x^2 + y^2 + p^2 + q^2 - 2 (p x + q y) ) } \e_{12} ) \\
& s = \frac{\sigma}{2 \sqrt{x^2 + y^2}} ( \sqrt{x^2 + y^2 + p^2 + q^2 + 2 (p x + q y)} + \\
& \sgn(v \wedge w \e_{12}) \sqrt{x^2 + y^2 + p^2 + q^2 - 2 (p x + q y) } \e_{12} ) \\
& s = \frac{\sigma}{2 \|v\|} ( \sqrt{v v + w w + v w + w v} + \\
& \sgn(v \wedge w \e_{12}) \sqrt{v v + w w - v w - w v } \e_{12} ) \\
& s = \frac{\sigma}{2 \|v\|} ( \sqrt{(v + w)^2} + \sgn(v \wedge w \e_{12}) \sqrt{(v - w)^2} \e_{12} ) \\
& s = \frac{\sigma}{2 \|v\|} ( \|v + w\| + \sgn(v \wedge w \e_{12}) \|v - w\| \e_{12} ) \\
\end{align*}
From the assumptions, there is $\|v + w\| \ne 0$ so:
\begin{equation*}
s = \frac{\sigma}{2 \|v\| \|v + w\|} ( \|v + w\|^2 + \sgn(v \wedge w \e_{12}) \|v - w\| \|v + w\|\e_{12} )
\end{equation*}
Next, the basic identity from lemma \ref{thm:lemma_external_product} is used so the formula can be rewritten with only external product:
\begin{equation*}
w \wedge v = \frac{1}{2} \sgn(v \wedge w \e_{12}) \|v - w\| \|v + w\|\e_{12}
\end{equation*}
It reads that the external product is equal to a bivector whose area is equal to the half of the area of the parallelogram spanned by the both vectors and with the sign adjusted to the sign of the external product. Using the above identity, the following further simplifications are possible:
\begin{align*}
& s = \frac{\sigma}{2 \|v\| \|v + w\|} ( \|v + w\|^2 + 2 v \wedge w ) \\
& s = \frac{\sigma}{2 \|v\| \|v + w\|} ( v v + w w + 2 v \cdot w + 2 v \wedge w ) \\
& s = \frac{\sigma}{\|v\| \|v + w\|} ( v v + w v ) \\
& s = \frac{\sigma}{\|v\| \|v + w\|} (v + w ) v \\
& s = \pm \N(v + w) \N(v)
\end{align*}
Which is precisely the stated formula.
\end{proof}

We observe that the same vector formula applies to any Clifford algebra of $n$-dimensional real space. Thus, we propose the following:

\begin{proposition} \label{thm:proposition_swing_formula}
Let $v$ and $w$ be a pair of non-zero vectors of equal length in $\Cl_n$ such that $v \ne -w$. Then there exist a pair of spinors each of which rotate $v$ so that it is equal to $w$:
\begin{equation*}
s v s^{-1} = w
\end{equation*}
the pair of spinors is equal to:
\begin{equation*}
s = \pm \N(v + w) \N(v)
\end{equation*}
\end{proposition}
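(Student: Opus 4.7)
The plan is to perform a direct, coordinate-free verification that $s = \N(v+w)\N(v)$ satisfies $s v s^{-1} = w$. Unlike the two-dimensional lemma, where the system \eqref{eqn:rotating_spinor_lemma_initial_set} was small enough to solve explicitly in coordinates, for general $\Cl_n$ it is much cleaner to take the candidate formula and verify the rotation equation using only the defining identities of the Clifford algebra, namely $u^2 = \|u\|^2$ for any vector $u$ and the polarisation identity $uw + wu = 2(u \cdot w)$.

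First I would check that $s$ is a well-defined element of $\Spin(n)$. The hypothesis $v \ne 0$ makes $\N(v)$ defined, and the hypothesis $v \ne -w$ together with $v, w \ne 0$ makes $v + w \ne 0$, so $\N(v+w)$ is defined as well. Since $s$ is a product of two unit vectors it is an even-graded element, and the identity $u^2 = \|u\|^2$ collapses $s \tilde{s} = \N(v+w)\N(v)\N(v)\N(v+w) = 1$, so $s \in \Spin(n)$ with $s^{-1} = \tilde{s} = \N(v)\N(v+w)$.

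Next I would simplify the sandwich. The identity $\N(v)\,v = v\,\N(v) = \|v\|$ causes the inner pair to telescope, so
\begin{equation*}
s v s^{-1} = \N(v+w) \, \N(v) \, v \, \N(v) \, \N(v+w) = \N(v+w) \, v \, \N(v+w),
\end{equation*}
and the claim reduces to $(v+w)\,v\,(v+w) = \|v+w\|^2\, w$. This is the only real computation. Expanding the triple product and using $v^2 = w^2 = \|v\|^2$ together with
\begin{equation*}
w v w = (2(v \cdot w) - v w)\,w = 2(v \cdot w)\,w - \|v\|^2\, v,
\end{equation*}
where the equal-length hypothesis enters, the terms proportional to $v$ cancel and one is left with $(v+w)\,v\,(v+w) = 2(\|v\|^2 + v \cdot w)\,w$, which equals $\|v+w\|^2 w$ because $\|v+w\|^2 = 2\|v\|^2 + 2(v\cdot w)$.

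I do not anticipate any serious obstacle beyond careful bookkeeping with non-commutative Clifford multiplication; once $\N(v)\,v\,\N(v) = v$ strips the outer $v$-factors down to a single sandwich by $v+w$, the remaining identity is essentially the geometric fact that reflection through the hyperplane perpendicular to $v+w$ swaps $-v$ and $w$, which is what makes $\N(v+w)\N(v)$ the direct rotation taking $v$ to $w$. The $\pm$ ambiguity in the statement is then automatic, since $(-s)\,v\,(-s)^{-1} = s\,v\,s^{-1}$.
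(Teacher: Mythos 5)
Your proof is correct, coordinate-free, and proves the proposition in full generality, as the paper's does — but the route through the algebra is organized differently. The paper proves the equivalent identity $s v = w s$ by a somewhat reverse-engineered chain: it starts from the tautology $v - v = v - v$, multiplies each copy of $v$ by $vv$ or $ww$ (this is where $\|v\| = \|w\|$ enters), regroups to $(v+w)\,v\,v = w\,(v+w)\,v$, and divides by $\|v+w\|\,\|v\|$; it then checks separately that $s$ is an even element of unit norm. You instead compute the sandwich $s v s^{-1}$ directly, first collapsing the inner pair via $\N(v)\,v\,\N(v) = v$, which cleanly reduces everything to the single identity $(v+w)\,v\,(v+w) = \|v+w\|^{2}\, w$, verified with the polarisation identity $wv + vw = 2(v\cdot w)$ and the equal-length hypothesis. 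The computational content — $\|v\|=\|w\|$ plus the Clifford relations — is the same, but your version is a forward verification whose motivation is visible at every step, whereas the paper's manipulation only becomes clear in retrospect. Your closing observation that $\N(v+w)\N(v)$ is the composition of two reflections, with the hyperplane orthogonal to $v+w$ carrying $-v$ to $w$, supplies the geometric reason the formula works, which the paper leaves implicit. Both proofs agree on the well-definedness and spinorhood checks and on how the $\pm$ sign arises.
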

\begin{proof}
Instead of the original equation $s v s^{-1} = w$ an equivalent equation will be proved:
\begin{equation*}
s v = w s
\end{equation*}
Let $v$ and $w$ are the given vectors. Start with:
\begin{equation*}
v - v = v - v
\end{equation*}
since $vv = ww > 0$ each element can be multiplied by the scalar $vv$ or $ww$:
\begin{align*}
& v v v - w w v = w v v - w v v \\
& v v v + w v v = w v v + w w v \\
& (v + w) v v = w (v + w) v
\end{align*}
it is assumed that $v \ne -w$ so $\|v + w\| > 0$ and both sides can be divided by $\|v + w\| \|v\|$:
\begin{equation*}
\N(v + w) \N(v) v = w \N(v + w) \N(v)
\end{equation*}

To ensure that $s$ is a spinor it is sufficient to check whether it is a direct product of a scalar and a bivector and whether its norm is equal to 1. Indeed:
\begin{align*}
s = \N(v + w) \N(v) = \frac{vv + wv}{\|v + w\|\|v\|} = \frac{\|v\|^2 + wv}{\|v + w\|\|v\|} \in \bigwedge_0 \R^3 \otimes \bigwedge_2 \R^3
\end{align*}
and since
\begin{align*}
& s^{-1} = \N(\|v\|^2 + w \cdot v + w \wedge v)^{-1} \\
& = \N(\|v\|^2 + v \cdot w + v \wedge w) = \N(vv + vw) = \N(v) \N(v + w)
\end{align*}
the norm is:
\begin{equation*}
s s^{-1} = \N(v + w) \N(v) \N(v) \N(v + w) = \| \N(v+w) \|^2 \| \N(v) \|^2 = 1
\end{equation*}

Which completes the proof.
\end{proof}

Since the theorem is valid for $\Cl_n$ it is also valid for $\Cl_3$. In more general conclusion, for any dimension there exists a simple formula giving a pair of spinors rotating a given vector so that it covers another given vector of the same length.

\section{The set of spinors which rotate a given initial vector to a given target vector}
\label{sec:two_st_decompositions}

There are two ways of defining a swing-twist representation (composition or decomposition):
\begin{itemize}
\item twist $q$ is done \textbf{before} swing $p$; for a given spinor $r$ we have: $s = p q$. \\
This is the \textbf{swing-after-twist} representation.
\item twist $q$ is done \textbf{after} swing $p$; for a given spinor $r$ we have: $s = q p$. \\
This is the \textbf{twist-after-swing} representation.
\end{itemize}

The swing-after-twist representation is used in \cite{huyghe} while the twist-after-swing representation is usually used in the direct method. Since in practice both conventions are used (the first one is slightly less common) in this paper both decompositions will be presented and proved. The following theorem is a general solution to Clifford product equation:
\begin{equation*}
s v s^{-1} = w
\end{equation*}
We prove the following
\begin{proposition} \label{thm:proposition_st_representation}
Let $v$ and $w$ be a pair of non-zero vectors of equal length in $\Cl_3$ such that $v \ne -w$. The set of spinor solutions $s \in \Spin(3)$ to
\begin{equation*}
s v s^{-1} = w
\end{equation*}
in the case of swing-after-twist representation is equal to:
\begin{equation*}
s = \pm \N(v + w) \N(v) \exp(\e_{123} \alpha \N(v))
\end{equation*}
and in the case of twist-after-swing representation is equal to:
\begin{equation*}
s = \pm \exp(\e_{123} \alpha \N(w)) \N(v + w) \N(v)
\end{equation*}
\end{proposition}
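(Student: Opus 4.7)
The plan is to leverage the two preceding propositions together. Proposition \ref{thm:proposition_swing_formula} supplies a particular solution $s_0 := \N(v+w)\N(v)$ (together with $-s_0$) to the conjugation equation $s v s^{-1} = w$, and Proposition \ref{thm:proposition_twist_formula} describes the complete stabilizer of a non-zero vector as $\exp(\e_{123}\alpha\N(v))$, $\alpha \in [0; 2\pi)$. Since the solution set of $s v s^{-1} = w$ is a coset of the stabilizer through any particular solution, the two decomposition forms will emerge by multiplying the stabilizer on the right of $s_0$ (producing swing-after-twist) or on the left (producing twist-after-swing).

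For the swing-after-twist case I would take an arbitrary spinor $s$ satisfying $s v s^{-1} = w$ and form $t := s_0^{-1} s$. A short conjugation calculation $t v t^{-1} = s_0^{-1} (s v s^{-1}) s_0 = s_0^{-1} w s_0 = v$, where the last equality follows from $s_0 v s_0^{-1} = w$, shows that $t$ stabilizes $v$. Proposition \ref{thm:proposition_twist_formula} then identifies $t = \exp(\e_{123}\alpha\N(v))$, giving $s = s_0 \exp(\e_{123}\alpha\N(v))$. The global $\pm$ in the stated formula is exactly the sign ambiguity already present in $s_0$, reflecting the $2{:}1$ covering $\Spin(3) \to \SO(3)$.

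The twist-after-swing case is handled symmetrically by setting $t := s s_0^{-1}$ and checking $t w t^{-1} = s s_0^{-1} w s_0 s^{-1} = s v s^{-1} = w$. Proposition \ref{thm:proposition_twist_formula} applied with $w$ in place of $v$ then forces $t = \exp(\e_{123}\alpha\N(w))$, which rearranges to $s = \exp(\e_{123}\alpha\N(w)) \cdot s_0$. The conceptual point, worth highlighting after the calculation, is that after the swing has transported $v$ to $w$, the subsequent twist must fix $w$ rather than $v$, which is why the exponential axis becomes $\N(w)$.

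Finally I would verify the converse inclusion: every spinor of the prescribed form does solve $s v s^{-1} = w$. This is immediate from the fact that conjugation by $\exp(\e_{123}\alpha\N(v))$ fixes $v$ (by Proposition \ref{thm:proposition_twist_formula}) together with $s_0 v s_0^{-1} = w$ (from Proposition \ref{thm:proposition_swing_formula}). I do not anticipate a real obstacle: the only bookkeeping subtleties are the $\pm$ sign and being careful that the exponential's axis switches from $\N(v)$ to $\N(w)$ between the two decomposition types, both of which are dictated by the elementary coset argument above.
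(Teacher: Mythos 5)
Your proof is correct and follows the same high-level plan as the paper --- compose the particular solution from Proposition~\ref{thm:proposition_swing_formula} with the stabilizer from Proposition~\ref{thm:proposition_twist_formula} --- but you supply the rigour the paper omits. The paper's proof is essentially a verbal description: it asserts that all solutions arise by composing a swing that carries $v$ to $w$ with an arbitrary twist (in the order dictated by the chosen representation) and then names the corresponding formulas, without ever demonstrating that no other solutions exist. Your coset argument --- setting $t := s_0^{-1}s$ (respectively $t := s\,s_0^{-1}$), verifying that $t$ stabilizes $v$ (respectively $w$), and invoking Proposition~\ref{thm:proposition_twist_formula} to identify $t$ as an exponential --- is exactly the missing step that establishes completeness, and the symmetric treatment makes transparent why the exponential's axis is $\N(v)$ when it appears on the right of $s_0$ and $\N(w)$ when it appears on the left. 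One small refinement to your closing remark: the $\pm$ in the stated formula is actually redundant given $\alpha\in[0;2\pi)$, because $-1 = \exp(\e_{123}\pi\N(v))$ already lies in the stabilizer, so the sign ambiguity of $s_0$ is absorbed into the $\alpha$ parameter rather than contributing an additional independent degree of freedom.
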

\begin{proof}
The solution is the set of all possible rotations which swing axis from initial $v$ to terminal $w$ with any possible twist during the movement. Using propositions \ref{thm:proposition_twist_formula} and \ref{thm:proposition_swing_formula} one composes swing and twist according to the order used in a given representation. In the case of swing-after-twist representation twist $q = \exp(\e_{123} \alpha \N(v))$ factor (around $v$ axis) is applied at first and only after it, swing factor $p = \N(v + w) \N(v)$ is applied (swinging the axis from $v$ to $w$). In the other case of twist-after-swing representation swing $p = \N(v + w) \N(v)$ is applied first (swinging the axis from $v$ to $w$) and after that twist factor $q = \exp(\e_{123} \alpha \N(w))$ is applied with respect to the terminal axis $w$.
\end{proof}

\section{Swing-twist decomposition of a spinor}

This is the main result of this paper. In this section the inverse of formulas given in proposition \ref{thm:proposition_st_representation} is derived. Given a spinor it is possible to calculate its decomposition into twist and swing factors in respect to a given non-zero vector. Since there are two different swing-twist representations, there are also two swing-twist decompositions for swing-after-twist and twist-after-swing representation respectively. In this section, the initial vector will be called a base vector.

\begin{theorem}[Swing-twist decomposition of a spinor in swing-after-twist representation] \label{thm:swing_twist_decomposition_sat_type}
Assume that $s \in \Spin(3)$ is a spinor. For any non-zero base vector $v \in \Cl_3$ such that $s v s^{-1} \ne -v$ there exist \textbf{a unique up to the sign} swing-twist decomposition in swing-after-twist representation
\begin{equation*}
s = \pm p q
\end{equation*}
where swing spinor $p$ and twist spinor $q$ are equal to:
\begin{align*}
& p = \pm s \tilde{\sigma}_v(s) \\
& q = \pm \sigma_v(s)
\end{align*}
where $\sigma_v(s): \Spin(3) \longrightarrow \Spin(3)$ is a function of spinor $s$:
\begin{equation*}
\sigma_b(s) = \N(v (v \cdot s))
\end{equation*}
the reversion $\tilde{\sigma}_v(s)$ is equal to:
\begin{equation*}
\tilde{\sigma}_v(s) = \N(v (v \cdot \tilde{s}))
\end{equation*}
The function $\sigma_v(s)$ will be called \textbf{a twist projection function}. 
\end{theorem}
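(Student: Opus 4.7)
The plan is to apply Proposition~\ref{thm:proposition_st_representation} to the terminal vector $w := svs^{-1}$: since $w \ne -v$ by hypothesis, $v + w \ne 0$, and there is a unique $\alpha \in [0, 2\pi)$ with $s = \epsilon\, \N(v+w)\N(v)\, \exp(\e_{123}\alpha\N(v)) = \epsilon p q$ for some sign $\epsilon \in \{\pm 1\}$. This immediately handles existence and uniqueness up to sign; the remaining task is to identify the twist factor $q := \exp(\e_{123}\alpha\N(v))$ with $\pm\sigma_v(s)$ in closed form, after which $p = s q^{-1} = \pm s\tilde\sigma_v(s)$ is automatic.

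To make the identification I will pass to the Hodge dual. For a spinor $t$ set $\vec t := \star[t]_2$. Two observations isolate the twist axis: since $[p]_2 = \N(v+w)\wedge\N(v)$, the vector $\vec p$ is a scalar multiple of $w\times v$ and so perpendicular to $v$; and $q = \cos\alpha + \sin\alpha\,\e_{123}\N(v)$ gives $\vec q = \sin\alpha\,\N(v)$, parallel to $v$. Using the product identities
\begin{equation*}
[AB]_0 = [A]_0[B]_0 - \vec A\cdot\vec B, \qquad \star[AB]_2 = [A]_0\vec B + [B]_0\vec A - \vec A\times\vec B,
\end{equation*}
(which follow from splitting a bivector--bivector product into its scalar and bivector grades) together with $\vec p \perp \vec q$, one finds $[s]_0 = [p]_0\cos\alpha$ and $\vec s = [p]_0\sin\alpha\,\N(v) + \cos\alpha\,\vec p - \vec p\times\vec q$. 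The last two summands of $\vec s$ are perpendicular to $v$ (the first because $\vec p \perp v$, the second because $\vec q \parallel v$ forces $\vec p\times\vec q\perp v$), so $v\cdot\vec s = [p]_0\sin\alpha\,\|v\|$. Reading $v\cdot s$ as $\tfrac12(vs+sv) = [s]_0 v + v\wedge[s]_2 = [s]_0 v + (v\cdot\vec s)\e_{123}$ and using $v\e_{123} = \|v\|\,\e_{123}\N(v)$, one computes
\begin{equation*}
v(v\cdot s) = \|v\|^2[s]_0 + (v\cdot\vec s)\,v\e_{123} = \|v\|^2[p]_0\bigl(\cos\alpha + \sin\alpha\,\e_{123}\N(v)\bigr) = \|v\|^2[p]_0\,q.
\end{equation*}
Since $w\ne -v$ forces $[p]_0 = \tfrac{\|v\|^2 + v\cdot w}{\|v+w\|\,\|v\|} \ne 0$, normalising yields $\sigma_v(s) = \sgn([p]_0)\,q$, so $q = \pm\sigma_v(s)$ and $p = \pm s\tilde\sigma_v(s)$. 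The reversion identity $\tilde\sigma_v(s) = \N(v(v\cdot\tilde s))$ then falls out of $[\tilde s]_0 = [s]_0$ and $[\tilde s]_2 = -[s]_2$, which reverse the bivector part of $v(v\cdot s)$.

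The main obstacle will be the grade-algebra bookkeeping behind $v(v\cdot s) = \|v\|^2[p]_0\,q$: separating scalar and bivector grades of bivector--bivector products, tracking the opposite signs picked up by $\star$ on vectors versus bivectors, and recognising that \emph{both} cross-terms in $\star[pq]_2$ lie in the plane perpendicular to $v$ so that only the $[p]_0\vec q$ contribution survives the contraction with $v$. This clean collapse is the structural reason the formula $\sigma_v(s) = \N(v(v\cdot s))$ extracts exactly the twist component of $s$, and it is specific to $p$ being a swing from $v$ to $w$ rather than an arbitrary rotor.
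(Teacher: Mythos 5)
Your proof is correct, and it takes a genuinely different route from the paper. The paper's proof writes $s = pq$ out in coordinates, packages three of the resulting scalar equations as a $3\times3$ linear system for the components of $w = svs^{-1}$, inverts the matrix, and then recovers $\cos\alpha$ and $\sin\alpha$ through a long chain of algebraic reductions, with the cases $l=0$ and $a=0$ treated separately along the way. Your argument instead invokes Proposition~\ref{thm:proposition_st_representation} once to dispose of existence and uniqueness, then isolates the twist factor by a coordinate-free grade computation: because $\vec p \perp v$ (a swing bivector is a blade containing $v$) and $\vec q \parallel v$ (a twist bivector is dual to $v$), the product formulas for $[pq]_0$ and $\star[pq]_2$ collapse under contraction with $v$ to leave $v(v\cdot s) = \|v\|^2\,[p]_0\,q$, and positivity of $[p]_0 = \frac{\|v\|^2 + v\cdot w}{\|v+w\|\|v\|}$ (a consequence of $v\ne -w$) makes normalisation unambiguous. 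This buys you two things the paper's route does not: the degenerate subcases are absorbed automatically rather than checked by hand, and the derivation actually explains \emph{why} $\sigma_v(s) = \N(v(v\cdot s))$ extracts the twist, which is precisely the conceptual gap the paper criticises in Huyghe's ad hoc projection. The only thing the paper's coordinate grind supplies that yours does not is an independent re-derivation of Proposition~\ref{thm:proposition_st_representation}'s content, but since that proposition is already proved, relying on it is legitimate and cleaner.
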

\begin{proof}
The proof is divided into several steps. At first the problem is reformulated in coordinates. Then, twist angle is calculated and twist spinor and finally swing spinor. Denote the following:
\begin{align*}
& s = a + b \e_{12} + c \e_{23} + d \e_{31} \\
& v = v_x \e_1 + v_y \e_2 + v_z \e_3 \\
& w = w_x \e_1 + w_y \e_2 + w_z \e_3
\end{align*}
The angle $\alpha$ will be described implicitly by:
\begin{align*}
& \cos(\alpha) = k \\
& \sin(\alpha) = l \\
& k^2 + l^2 = 1
\end{align*}
Vector $w$ exists on the assumption that $s v s^{-1} \ne -v$. What does this assumption require about the base vector and the spinor is explained in the appendix of this paper. At first, the equation is rewritten in coordinates. The swing factor is equal to:
\begin{align*}
p & = \N(w + v) \N(v) \\
& = \frac{( (w_x + v_x) \e_1 + (w_y + v_y) \e_2 + (w_z + v_z) \e_3)(v_x \e_1 + v_y \e_2 + v_z \e_3)}{\sqrt{(w_x + v_x)^2 + (w_y + v_y)^2 + (w_z + v_z)^2}\sqrt{v_x^2 + v_y^2 + v_z^2}} \\
& = ((w_x + v_x)^2 + (w_y + v_y)^2 + (w_z + v_z)^2)^{-\frac{1}{2}}(v_x^2 + v_y^2 + v_z^2)^{-\frac{1}{2}} \cdot \\
& [ (w_x + v_x) v_x + (w_y + v_y) v_y + (w_z + v_z) v_z + \\
& ((w_x + v_x) v_y - (w_y + v_y) v_x) \e_{12} + ((w_y + v_y) v_z - (w_z + v_z) v_y) \e_{23} + \\
& ((w_z + v_z) v_x - (w_x + v_x) v_z) \e_{31} ]
\end{align*}
Twist factor is equal to:
\begin{equation} \label{eqn:general_formula_for_twist}
q = \cos(\alpha) + \e_{123} \N(v) \sin(\alpha) = k + \frac{l}{\sqrt{v_x^2 + v_y^2 + v_z^2}} (b_z \e_{12} + b_x \e_{23} + b_y \e_{31})
\end{equation}
Combining the above formulas there is:
\begin{align*}
pq & = ((w_x + v_x)^2 + (w_y + v_y)^2 + (w_z + v_z)^2)^{-\frac{1}{2}}(v_x^2 + v_y^2 + v_z^2)^{-\frac{1}{2}} \cdot \\
& [ (w_x + v_x) v_x + (w_y + v_y) v_y + (w_z + v_z) v_z + \\
& ((w_x + v_x) v_y - (w_y + v_y) v_x) \e_{12} + ((w_y + v_y) v_z - (w_z + v_z) v_y) \e_{23} + \\
& ((w_z + v_z) v_x - (w_x + v_x) v_z) \e_{31} ] \cdot \\
& [k + \frac{l}{\sqrt{v_x^2 + v_y^2 + v_z^2}} (b_z \e_{12} + b_x \e_{23} + b_y \e_{31}) ] \\
& = ((w_x + v_x)^2 + (w_y + v_y)^2 + (w_z + v_z)^2)^{-\frac{1}{2}}(v_x^2 + v_y^2 + v_z^2)^{-\frac{1}{2}} \cdot \\
& [ k ( (w_x + v_x) v_x + (w_y + v_y) v_y + (w_z + v_z) v_z) + \\
& ( k (w_x v_y - w_y v_x) + l \sqrt{v_x^2 + v_y^2 + v_z^2} (w_z + v_z) ) \e_{12} + \\
& ( k (w_y v_z - w_z v_y) + l \sqrt{v_x^2 + v_y^2 + v_z^2} (w_x + v_x) ) \e_{23} + \\
& ( k (w_z v_x - w_x v_z) + l \sqrt{v_x^2 + v_y^2 + v_z^2} (w_y + v_y) ) \e_{31} ]
\end{align*}

Two spinors are equal if and only if corresponding coefficients are equal. Therefore the following set of equations determines the solution:
\begin{equation} \label{eqn:set_of_equations_decomposition}
\begin{cases}
a = [ k ( (w_x + v_x) v_x + (w_y + v_y) v_y + (w_z + v_z) v_z) ] \Delta \\
b = [ k (w_x v_y - w_y v_x) + l \sqrt{v_x^2 + v_y^2 + v_z^2} (w_z + v_z) ] \Delta \\
c = [ k (w_y v_z - w_z v_y) + l \sqrt{v_x^2 + v_y^2 + v_z^2} (w_x + v_x) ] \Delta \\
d = [ k (w_z v_x - w_x v_z) + l \sqrt{v_x^2 + v_y^2 + v_z^2} (w_y + v_y) ] \Delta \\
k^2 + l^2 = 1 \\
w_x^2 + w_y^2 + w_z^2 = v_x^2 + v_y^2 + v_z^2 \\
a^2 + b^2 + c^2 + d^2 = 1
\end{cases}
\end{equation}
Where $\Delta = ((w_x + v_x)^2 + (w_y + v_y)^2 + (w_z + v_z)^2)^{-\frac{1}{2}}(v_x^2 + v_y^2 + v_z^2)^{-\frac{1}{2}}$. By substituting variables in the set of equations \eqref{eqn:set_of_equations_decomposition} it is easy to rise unfavourably the degree of involved polynomials. Thus the following careful operations are performed. First, note that it is easy to obtain variable $k$ from the first equation:
\begin{equation} \label{eqn:deomcposition_first_equation_unsimplified}
a = \frac{k (w + v) \cdot v}{\|w + v\| \|v\|}
\end{equation}
Observe that:
\begin{equation*}
\|w + v\|^2 = (w + v)(w + v) = w w + v v + w v + v w = 2 ( v \cdot v + v \cdot w) = 2 v \cdot (w + v)
\end{equation*}
Using this, one simplifies \eqref{eqn:deomcposition_first_equation_unsimplified} to:
\begin{equation*}
a = \frac{k (w + v) \cdot v}{\|w + v\| \|v\|} = \frac{k \|w + v\|^2}{\|w + v\| \|v\|} = k \frac{\|w + v\|}{2 \|v\|}
\end{equation*}
From that one obtains the value of $k$:
\begin{equation} \label{eqn:formula_for_k}
k = a \frac{2 \|v\|}{\|w + v\|}
\end{equation}
The value of $k$ is now put into the set of equations \eqref{eqn:set_of_equations_decomposition} which then can be simplified to:
\begin{equation} \label{eqn:set_of_equations_decomposition_simplified}
\begin{cases}
b \|w + v\| = 2 a \frac{1}{\|w + v\|} (w_x v_y - w_y v_x) + l (w_z + v_z) \\
c \|w + v\| = 2 a \frac{1}{\|w + v\|} (w_y v_z - w_z v_y) + l (w_x + v_x) \\
d \|w + v\| = 2 a \frac{1}{\|w + v\|} (w_z v_x - w_x v_z) + l (w_y + v_y)
\end{cases}
\end{equation}
It is a set of three linear equations with three unknowns $M [w_x, w_y, w_z]^T = N$. The characteristic matrix $M$ is:
\begin{equation*}
M =
\begin{vmatrix}
2 a \frac{1}{\|w + v\|} v_y & -2 a \frac{1}{\|w + v\|} v_x & l \\
l & 2 a \frac{1}{\|w + v\|} v_z & -2 a \frac{1}{\|w + v\|} v_y \\
-2 a \frac{1}{\|w + v\|} v_z & l & 2 a \frac{1}{\|w + v\|} v_x
\end{vmatrix}
\end{equation*}
and the vector $N$ is equal to:
\begin{equation*}
N = [b \|w + v\| - l v_z, c \|w + v\| - l v_x, d \|w + v\| - l v_y]^T
\end{equation*}
Any known method can be used to compute the determinant of $M$ which is equal to:
\begin{equation*}
det(M) = l
\end{equation*}
The case $l = 0$ needs special care and will be now discussed separately. In this case, from \eqref{eqn:general_formula_for_twist} twist factor is equal to:
\begin{equation*}
q = k + \frac{l}{\sqrt{v_x^2 + v_y^2 + v_z^2}} (b_z \e_{12} + b_x \e_{23} + b_y \e_{31}) = k
\end{equation*}
Since $q$ is a spinor, its component $k$ must be equal to:
\begin{align*}
k = \pm 1
\end{align*}
which then implies that twist and swing factors are equal to:
\begin{align*}
p & = \pm s \\
q & = \pm 1
\end{align*}
It is a special case of the general formula. It holds for all vector $v$ and spinors $s$ such that $\alpha = z \pi, \, z \in \mathbb{Z}$. In the case of $l \ne 0$ there exist exactly one solution to \eqref{eqn:set_of_equations_decomposition_simplified}. After having the matrix inverted and solution calculated, one gets:
\begin{align*}
& w_x = [4 a^2 (v_x^2 + v_y^2 + v_z^2) l + \|w + v\|^2 l^3]^{-1} \\
& [ 2 a (b v_y - d v_z) \|w + v\|^2 l + \\
& \|w + v\|^2 l^2 (c \|w + v\| - v_x l) + 4 a^2 v_x ((b v_z + v_x c + v_y d) \|w + v\| - (v_x^2 + v_y^2 + v_z^2) l) ] \\
& w_y = [4 a^2 (v_x^2 + v_y^2 + v_z^2) l + \|w + v\|^2 l^3]^{-1} \\
& [ 2 a (-b v_x + c v_z) \|w + v\|^2 l + \\
& \|w + v\|^2 l^2 (d \|w + v\| - v_y l) + 4 a^2 v_y ((b v_z + v_x c + v_y d) \|w + v\| - (v_x^2 + v_y^2 + v_z^2) l) ] \\
& w_z = [4 a^2 (v_x^2 + v_y^2 + v_z^2) l + \|w + v\|^2 l^3]^{-1} \\
& [ 2 a (c v_y + d v_x) \|w + v\|^2 l + \\
& \|w + v\|^2 l^2 (b \|w + v\| - v_z l) + 4 a^2 v_z ((b v_z + v_x c + v_y d) \|w + v\| - (v_x^2 + v_y^2 + v_z^2) l) ]
\end{align*}
All three denominators are equal and can be simplified:
\begin{align*}
& 4 a^2 (v_x^2 + v_y^2 + v_z^2) l + \|w + v\|^2 l^3 \\
& = l (4 a^2 \|v\|^2 + \|w + v\|^2 (1 - a^2 \frac{4 \|v\|^2}{\|w+v\|^2})) = l \|w + v\|^2
\end{align*}
A compact form is achieved with the following simplifications:
\begin{align*}
\begin{vmatrix} w_x \\ w_y \\ w_z \end{vmatrix} & = \frac{1}{l \|w + v\|^2} ( 2 a \begin{vmatrix} b v_y - d v_z \\ c v_z - b v_x \\ d v_x - c v_y \end{vmatrix} \|w + v\|^2 l + \|w + v\|^2 l^2 \begin{vmatrix} c \|w + v\| - v_x l \\ d \|w + v\| - v_y l \\ b \|w + v\| - v_z l \end{vmatrix} + \\
& 4 a^2 \begin{vmatrix} v_x \\ v_y \\ v_z \end{vmatrix} ( ( b v_z + c v_x + d v_y) \|w + v\| - \|v\|^2 l ) ) \\
& = 2 a \begin{vmatrix} v_x \\ v_y \\ v_z \end{vmatrix} \times \begin{vmatrix} c \\ d \\ b \end{vmatrix} + l \|w + v\| \begin{vmatrix} c \\ d \\ b \end{vmatrix} \\
& - l^2 \begin{vmatrix} v_x \\ v_y \\ v_z \end{vmatrix} + \frac{4 a^2}{l \|w + v\|^2} \begin{vmatrix} v_x \\ v_y \\ v_z \end{vmatrix} ( \|w + v\| \begin{vmatrix} v_x \\ v_y \\ v_z \end{vmatrix} \cdot \begin{vmatrix} c \\ d \\ b \end{vmatrix} - \|v\|^2 l )
\end{align*}

Vector $[c, d, b]^T$ can be formulated with coefficients of spinor $s$:
\begin{equation*}
\star [s]_2 = -\e_{123} [s]_2 = -\e_{123} (b \e_{12} + c \e_{23} + d \e_{31} ) = c \e_1 + d \e_2 + b \e_3
\end{equation*}
With this substitution further simplification is possible:
\begin{equation*}
w = 2 a v \times \star[s]_2 + l \|w + v\| \star[s]_2 - l^2 v + \frac{4 a^2}{l \|w + v\|^2} v (\|w + v\| v \cdot \star[s]_2 - \|v\|^2 l)
\end{equation*}
but form previous equations there is:
\begin{equation} \label{eqn:formula_ll}
l^2 = 1 - a^2 \frac{4 \|v\|^2}{\|w + v\|^2}
\end{equation}
so after substitution one obtains:
\begin{equation*}
w = 2 a v \times \star[s]_2 + l \|w + v\| \star[s]_2 - v + \frac{4 a^2 v}{l \|w + v\|} v \cdot \star[s]_2
\end{equation*}

The above formula is useful to compute $\|w + v\|$. Note that one can move $-v$ to the left hand side of the equation and compute lengths of both sides.
\begin{align*}
\|w + v\|^2 & = (2 a v \times \star[s]_2 + l \|w + v\| \star[s]_2 + \frac{4 a^2 v}{l \|w + v\|^2} v \cdot \star[s]_2)^2 \\
& = (2 a v \times \star[s]_2)^2 + (l \|w + v\| \star[s]_2)^2 + (\frac{4 a^2 v}{l \|w + v\|^2} v \cdot \star[s]_2)^2 \\
& + 2 [ (2 a v \times \star[s]_2) \cdot (l \|w + v\| \star[s]_2) + (2 a v \times \star[s]_2) \cdot (\frac{4 a^2 v}{l \|w + v\|^2} v \cdot \star[s]_2) \\
& + (l \|w + v\| \star[s]_2) \cdot (\frac{4 a^2 v}{l \|w + v\|^2} v \cdot \star[s]_2) ] \\
& = 4 a^2 (b \times \star[s]_2)^2 + l^2 \|w + v\|^2 (\star[s]_2)^2 + \frac{16 a^4 \|v\|^2}{l^2 \|w+v\|^2}(v\cdot \star[s]_2)^2 + 8 a^2 (v\cdot \star[s]_2)^2
\end{align*}
From the property that $(b \times \star[s]_2)^2 + (b \cdot \star[s]_2)^2 = \|b\|^2 (\star[s]_2)^2$ the following formula is obtained:
\begin{equation*}
\|w + v\|^2 = 4 a^2 \|b\|^2 (\star[s]_2)^2 + l^2 \|w + v\|^2 (\star[s]_2)^2 + 4 a^2 (v\cdot \star[s]_2)^2 + \frac{16 a^4 \|v\|^2}{l^2 \|w+v\|^2}(v\cdot \star[s]_2)^2
\end{equation*}
There is:
\begin{equation} \label{eqn:subst_llwv}
l^2 \|w + v\|^2 = \left(1 - \frac{4 a^2 \|v\|^2}{\|w+v\|^2} \right) \|w + v\|^2 = \|w + v\|^2 - 4 a^2 \|v\|^2
\end{equation}
So one can write the following equivalent formula:
\begin{align*}
\|w + v\|^2 & = 4 a^2 \|b\|^2 (\star[s]_2)^2 + (\|w + v\|^2 - 4 a^2 \|v\|^2) (\star[s]_2)^2 + \\
& 4 a^2 (v\cdot \star[s]_2)^2 + \frac{16 a^4 \|v\|^2}{l^2 \|w+v\|^2}(v\cdot \star[s]_2)^2 \\
& = \|w + v\|^2 (\star[s]_2)^2 + 4 a^2 (v\cdot \star[s]_2)^2 + \frac{16 a^4 \|v\|^2}{l^2 \|w+v\|^2}(v\cdot \star[s]_2)^2
\end{align*}
Substituting $(\star[s]_2)^2 = 1 - a^2$ and once again \eqref{eqn:subst_llwv} one obtains:
\begin{equation*}
\|w + v\|^2 = \|w + v\|^2 (1 - a^2) + (v \cdot \star[s]_2)^2 \left( \frac{4 a^2 (\|w + v\|^2 - 4 a^2 \|v\|^2) + 16 a^4 \|v\|^2}{l^2 \|w+v\|^2} \right)
\end{equation*}
Which can be simplified to:
\begin{align*}
& a^2 \|w + v\|^2 = (v \cdot \star[s]_2)^2 \frac{4 a^2}{l^2} \\
& a^2 (\|w + v\|^2 - 4 a^2 \|v\|^2) = 4 a^2 (v \cdot \star[s]_2)^2
\end{align*}
There are two cases. When $a \ne 0$ the following equation applies:
\begin{equation*}
\|w + v\|^2 = 4 (v \cdot \star[s]_2)^2 + 4 a^2 \|v\|^2
\end{equation*}
Using the above identity, there is:
\begin{align*}
l^2 \|w + v\|^2 & = \|w + v\|^2 - 4 a^2 \|v\|^2 = 4 (v \cdot \star[s]_2)^2 + 4 a^2 \|v\|^2 - 4 a^2 \|v\|^2 \\
l^2 \|w + v\|^2 & = 4 (v \cdot \star[s]_2)^2
\end{align*}
Modifying the identity \eqref{eqn:formula_ll} there is:
\begin{align*}
l^2 & = 1 - a^2 \frac{4 \|v\|^2}{\|w + v\|^2} \\
\|w + v\|^2 - 4 a^2 \|v\|^2 & = l^2 \|w + v\|^2 \\
\|w + v\|^2 - 4 a^2 \|v\|^2 & = 4 (v \cdot \star[s]_2)^2 \\
\|w + v\|^2 & = 4 (v \cdot \star[s]_2)^2 + 4 a^2 \|v\|^2 \\
\|w + v\| & = 2 \sqrt{(v \cdot \star[s]_2)^2 + a^2 \|v\|^2}
\end{align*}
Using the formula \eqref{eqn:formula_for_k} one obtains the value of $k$:
\begin{equation} \label{eqn:geneal_k_solution}
k = a \frac{2 \|v\|}{\|w+v\|} = a \frac{2 \|v\|}{2 \sqrt{(v \cdot \star[s]_2)^2 + a^2 \|v\|^2}} = \frac{a \|v\|}{\sqrt{(v \cdot \star[s]_2)^2 + a^2 \|v\|^2}}
\end{equation}
Again from \eqref{eqn:formula_ll} it is possible to calculate $l$:
\begin{align*}
l^2 & = 1 - a^2 \frac{4 \|v\|^2}{\|w + v\|^2} = 1 - \frac{4 a^2 \|v\|^2}{4 [ (v \cdot \star[s]_2)^2 + a^2 \|v\|^2]} \\
    & = \frac{(v \cdot \star[s]_2)^2 + a^2 \|v\|^2}{(v \cdot \star[s]_2)^2 + a^2 \|v\|^2} - \frac{a^2 \|v\|^2}{(v \cdot \star[s]_2)^2 + a^2 \|v\|^2} = \frac{(v \cdot \star[s]_2)^2}{(v \cdot \star[s]_2)^2 + a^2 \|v\|^2}
\end{align*}
It is possible to take the square root and abandon sign. This comes from the fact that when $s$ is a given spinor then the solution represented by variable $l$ must include the solution for $-s$ as well. In that case the numerator is negated, so also $-l$ is a solution automatically.
\begin{equation} \label{eqn:geneal_l_solution}
l = \frac{v \cdot \star[s]_2}{\sqrt{(v \cdot \star[s]_2)^2 + a^2 \|v\|^2}}
\end{equation}
In the case when $a = 0$ from \eqref{eqn:formula_ll} there is:
\begin{align*}
l^2 & = 1 \\
k^2 & = 1 - l^2 = 0
\end{align*}
The same argumentation as above can be used to select any solution of $l^2 = 1$. Both cases are special cases of \eqref{eqn:geneal_k_solution} and \eqref{eqn:geneal_l_solution} so it is sufficient to consider only those general solutions. The twist factor equal to:
\begin{align*}
q & = k + \e_{123} \N(v) l = \\
& = \frac{a \|v\|}{\sqrt{(v \cdot \star[s]_2)^2 + 4 a^2 \|v\|^2}} + \frac{ \e_{123} \N(v) (v \cdot \star[s]_2)}{\sqrt{(v \cdot \star[s]_2)^2 + a^2 \|v\|^2}} \\
& = \frac{a \|v\| + \N(v) (v \cdot \star[s])}{\sqrt{(v \cdot \star[s]_2)^2 + a^2 \|v\|^2}} = \frac{a \|v\| + \N(v) (v \cdot s) - \N(v) (v \cdot [s]_0)}{\sqrt{(v \cdot \star[s]_2)^2 + a^2 \|v\|^2}} \\
& = \frac{a \|v\| + \N(v) (v \cdot s) - a \|v\|}{\sqrt{(v \cdot \star[s]_2)^2 + a^2 \|v\|^2}} = \frac{\N(v) (v \cdot s)}{\sqrt{(v \cdot \star[s]_2)^2 + a^2 \|v\|^2}}
\end{align*}
Define a norm of a pinor $p = p_0 + p_{12} \e_{12} + p_{23} \e_{23} + p_{31} \e_{31} \in \Pin(3)$:
\begin{equation*}
\N(p) := \frac{p}{\sqrt{p \tilde{p}}} = \frac{p_0 + p_{12} \e_{12} + p_{23} \e_{23} + p_{31} \e_{31}}{\sqrt{p_0^2 + p_{12}^2 + p_{23}^2 + p_{31}^2}} \in \Spin(3)
\end{equation*}
where $\tilde{p} = p_0 - p_{12} \e_{12} - p_{23} \e_{23} - p_{31} \e_{31}$.

Using the fact that $s = a + \e_{123} \star [s]_2$  the numerator of twist factor can be written as:
\begin{align} \label{eqn:numerator_twist}
& \N(v) (v \cdot s) = \N(v) (v \cdot (a + \e_{123} \star [s]_2) = \N(v) (v a + \e_{123} v \cdot \star [s]_2) \notag \\
& = a \|v\| + \e_{123} \N(v) v \cdot \star [s]_2
\end{align}
the norm of the numerator of twist factor is equal to:
\begin{align*}
& \sqrt{(a \|v\| + \e_{123} \N(v) v \cdot \star [s]_2)(a \|v\| - \e_{123} \N(v) v \cdot \star [s]_2)} \\
& = \sqrt{a^2 \|v\|^2 + (\e_{123} \N(v) v \cdot \star [s]_2)^2} = \sqrt{a^2 \|v\|^2 + (v \cdot \star [s]_2)^2}
\end{align*}
one notices that it is equal to the denominator. In result, it is possible to write simply:
\begin{equation*}
q = \N(v (v \cdot s))
\end{equation*}
The swing factor is calculated by modifying the original equation:
\begin{align*}
s & = p q \\
p & = s q^{-1}
\end{align*}
by using \eqref{eqn:numerator_twist} twist inverse can be easily calculated:
\begin{align*}
& q^{-1} = \tilde{q} = \tilde{\sigma}_v(s) = \widetilde{a \|v\| + \e_{123} \N(v) v \cdot \star [s]_2} = a \|v\| - \e_{123} \N(v) v \cdot \star [s]_2 \\
& = a \|v\| + \e_{123} \N(v) v \cdot \star [-s]_2 = \N(v) (v a + \e_{123} v \cdot \star [-s]_2) \\
& = \N(v) (v \cdot (a + \e_{123} \star [-s]_2)) = \N(v) (v \cdot \tilde{s})
\end{align*}
so
\begin{equation*}
p = s \tilde{\sigma}_v(s)
\end{equation*}
where $\tilde{\sigma}_v(s) = \N(v) (v \cdot \tilde{s})$. Both swing and twist factors can be negated as a given spinor and its negation define the same rotation:
\begin{align*}
& p = \pm s \tilde{\sigma}_v(s) \\
& q = \pm \sigma_v(s)
\end{align*}
which completes the proof.
\end{proof}

The following dual theorem can be easily proved with the previous theorem \eqref{thm:swing_twist_decomposition_sat_type}:

\begin{theorem}[Swing-twist decomposition of a spinor in twist-after-swing representation] \label{thm:swing_twist_decomposition_tas_type}
Assume that $s \in \Spin(3)$ is a spinor. For any non-zero base vector $v \in \Cl_3$ such that $s v s^{-1} \ne -v$ there exist \textbf{a unique up to the sign} swing-twist decomposition in twist-after-swing representation
\begin{equation*}
s = \pm q p
\end{equation*}
where swing spinor $p$ and twist spinor $q$ are equal to:
\begin{align*}
& p = \pm \tilde{\sigma}_v(s) s \\
& q = \pm \sigma_v(s)
\end{align*}
where $\sigma_v(s)$ is a twist projection function.
\end{theorem}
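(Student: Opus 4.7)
The plan is to reduce the twist-after-swing case to the swing-after-twist case of Theorem~\ref{thm:swing_twist_decomposition_sat_type} via the elementary rearrangement $pq = q(\tilde{q}pq)$, which moves a twist factor across a swing factor using the fact that $\tilde{q}q = 1$ for any spinor.

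For existence, I would first invoke Theorem~\ref{thm:swing_twist_decomposition_sat_type} applied to the same $s$ and the same $v$, obtaining $s = \pm p_0 q_0$ with $q_0 = \sigma_v(s)$ and $p_0 = s\tilde{\sigma}_v(s)$. Inserting $\tilde{q}_0 q_0 = 1$ on the left yields
\begin{equation*}
s = p_0 q_0 = q_0(\tilde{q}_0 p_0 q_0).
\end{equation*}
Substituting $p_0 = s\tilde{q}_0$ collapses the bracketed factor via $\tilde{q}_0 s \tilde{q}_0 q_0 = \tilde{q}_0 s = \tilde{\sigma}_v(s)\,s$. This directly produces the claimed TAS factorization with $q = \pm\sigma_v(s)$ and $p = \pm\tilde{\sigma}_v(s)\,s$, and the appearance of $\sigma_v(s)$ ensures (via Proposition~\ref{thm:proposition_twist_formula}) that the twist factor is indeed in the invariant set of $v$.

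For uniqueness up to sign I would run the rearrangement in reverse: any TAS factorization $s = q'p'$ in which $q'$ is a twist about $v$ can be rewritten as $s = (q'p'\tilde{q}')\,q'$, which is a SAT factorization with the same twist $q'$ and swing $q'p'\tilde{q}'$. The uniqueness half of Theorem~\ref{thm:swing_twist_decomposition_sat_type} then forces $q' = \pm\sigma_v(s)$ and $q'p'\tilde{q}' = \pm s\tilde{\sigma}_v(s)$; solving for $p'$ by conjugation and applying $\tilde{q}' q' = 1$ recovers $p' = \pm\tilde{\sigma}_v(s)\,s$.

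I do not foresee a serious obstacle here: the computational content is a single-line conjugation identity, and both existence and uniqueness transfer from the SAT theorem ``for free''. The only mild subtlety is making formally precise the TAS convention (namely, that the twist factor lie in the invariant set of the base vector $v$, as characterized in Proposition~\ref{thm:proposition_twist_formula}) so that the uniqueness assertion is well-posed; the paper's formula $q = \pm\sigma_v(s)$ encodes this implicitly.
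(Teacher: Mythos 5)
Your argument is correct, and it reduces the twist\textendash after\textendash swing case to Theorem~\ref{thm:swing_twist_decomposition_sat_type} just as the paper does, but by a genuinely different algebraic device. The paper passes to $u = s^{-1}$, applies the swing\textendash after\textendash twist theorem to $u$, and then inverts both sides: $u = \pm p_0 q_0 \Rightarrow s = \pm q_0^{-1}p_0^{-1}$, after which it checks $\sigma_v(s^{-1})^{-1} = \sigma_v(s)$ and $\tilde{\sigma}_v(s^{-1})^{-1}s = \tilde{\sigma}_v(s)s$. You instead apply the SAT theorem to $s$ itself and slide the twist to the left by inserting $q_0\tilde{q}_0 = 1$,
\begin{equation*}
s = p_0 q_0 = q_0\bigl(\tilde{q}_0 p_0 q_0\bigr) = q_0\,\tilde{q}_0 s = \sigma_v(s)\,\tilde{\sigma}_v(s)\,s,
\end{equation*}
and the contraction $\tilde{q}_0 p_0 q_0 = \tilde{q}_0 s\tilde{q}_0 q_0 = \tilde{q}_0 s$ is exact because $\tilde{q}_0 q_0 = 1$. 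Both routes are one\textendash liners; the paper's emphasizes the duality of the two decompositions under spinor inversion, while yours avoids passing through $s^{-1}$ entirely (so the hypothesis $svs^{-1}\ne -v$ is used verbatim rather than through the equivalent $s^{-1}vs\ne -v$). Your uniqueness paragraph, running the conjugation in reverse to map any admissible TAS factorization to a SAT one and invoking uniqueness there, is also sound and is in fact more explicit than the paper, which simply lets uniqueness transfer silently through the inversion. The one caveat you already flag, that the class of admissible twist factors (those in the invariant set of $v$) must be fixed before uniqueness is a well\textendash posed claim, is real; the paper leaves this implicit, so your treatment is, if anything, slightly more careful.
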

\begin{proof}
Assume that $u = s^{-1}$. According to theorem \ref{thm:swing_twist_decomposition_tas_type} there exists a swing-twist decomposition of spinor $u$ in swing-after-twist representation:
\begin{equation*}
u = \pm p q
\end{equation*}
where $p$ is a swing factor and $q$ is a twist factor of $u$ in respect to the base vector $v$. Taking inverse of both sides one obtains:
\begin{equation*}
u^{-1} = s = \pm q^{-1} p^{-1}
\end{equation*}
which is a twist-after-swing decomposition of spinor $s$. Swing and twist factors can be rewritten as:
\begin{align*}
p^{-1} & = (u \tilde{\sigma}_v(u))^{-1} = (s^{-1} \tilde{\sigma}_v(s^{-1}))^{-1} = \tilde{\sigma}_v(s^{-1})^{-1} s = \tilde{\sigma}_v(s) s \\
q^{-1} & = \sigma_v(u)^{-1} = \sigma_v(s^{-1})^{-1} = \sigma_v(s)
\end{align*}
which completes the proof.
\end{proof}

\section{Twist projection function}

In this section basic properties of the twist projection function are discussed. Intuitively, twist projection function takes a spinor and returns its twist factor in respect to a given vector. In this paper the following definition was assumed
\begin{equation*}
\sigma_v(s) := \N(v (v \cdot s))
\end{equation*}
This is algebraically equivalent to projecting it onto a vector in the sense of Clifford algebra. This an improvement over the work of Huyghe \cite{huyghe} where the author introduces similar projection operator artificially in quaternion algebra. Twist projection function is a projection due to the following
\begin{proposition}
A twist projection function is a projection.
\end{proposition}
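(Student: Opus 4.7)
The plan is to establish that $\sigma_v$ is idempotent, i.e., $\sigma_v(\sigma_v(s)) = \sigma_v(s)$; this is what \emph{projection} means in this context.

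My strategy proceeds in two steps. The first step identifies the image of $\sigma_v$: from the identity
\begin{equation*}
\N(v)(v \cdot s) = [s]_0\,\|v\| + (v \cdot \star[s]_2)\,\e_{123}\N(v)
\end{equation*}
derived in the proof of theorem \ref{thm:swing_twist_decomposition_sat_type}, the right-hand side is a scalar plus a bivector parallel to the fixed bivector $\e_{123}\N(v)$. After normalization, $\sigma_v(s)$ therefore always takes the form $\cos\alpha + \sin\alpha\,\e_{123}\N(v)$ for some angle $\alpha$, i.e., a pure twist spinor around $v$.

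The second step verifies that $\sigma_v$ fixes every such pure twist spinor $q = \cos\alpha + \sin\alpha\,\e_{123}\N(v)$. Here $[q]_0 = \cos\alpha$ and $\star[q]_2 = \sin\alpha\,\N(v)$, so $v \cdot \star[q]_2 = \sin\alpha\,\|v\|$. Substituting into the identity above yields $\N(v)(v \cdot q) = \|v\|\,q$, hence $v(v \cdot q) = \|v\|^2 q$; since $\|v\|^2 > 0$ and $q$ is already a unit spinor, the pinor-normalization returns $q$ exactly, so $\sigma_v(q) = q$. Chaining with step one, for any admissible $s$ we obtain $\sigma_v(\sigma_v(s)) = \sigma_v(q) = q = \sigma_v(s)$, as required. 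An alternative (shorter) route to the same conclusion is to invoke theorem \ref{thm:swing_twist_decomposition_sat_type} directly: since $\sigma_v$ returns the twist factor of its argument, and a pure twist spinor $q$ admits the trivial decomposition $q = 1 \cdot q$, uniqueness forces $\sigma_v(q) = \pm q$.

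The one subtlety worth flagging is sign control. Theorem \ref{thm:swing_twist_decomposition_sat_type} asserts uniqueness of the twist factor only up to sign, so a priori $\sigma_v(q)$ could equal $-q$. The positive-square-root convention in $\N(p) = p/\sqrt{p\tilde p}$ resolves this: in step two the numerator is $\|v\|^2 q$, which differs from $q$ by the strictly positive factor $\|v\|^2$, so the $+q$ branch is selected. This is the main technical point that needs to be checked carefully.
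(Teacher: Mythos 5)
Your proof is correct, and the core computation is the same one the paper uses: both rely on the identity $\N(v)(v\cdot s) = [s]_0\|v\| + (v\cdot\star[s]_2)\,\e_{123}\N(v)$ together with $v\cdot\N(v)=\|v\|$ to show that the pre-normalization quantity $v(v\cdot\sigma_v(s))$ is a positive scalar multiple of $v(v\cdot s)$, so normalization collapses the two. Where you differ is organization: the paper plugs the closed form of $\sigma_v(s)$ back into $\sigma_v$ and simplifies one long algebraic chain, whereas you factor the argument into the textbook shape ``identify the image, then show the image is pointwise fixed,'' which makes the structure of the proof more transparent and shows explicitly that the image of $\sigma_v$ is the set of twist spinors about $v$. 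Your second, alternative route via theorem \ref{thm:swing_twist_decomposition_sat_type} (a pure twist $q$ has the trivial decomposition $q = 1\cdot q$, so uniqueness up to sign forces $\sigma_v(q)=\pm q$) is genuinely different from the paper's and is conceptually appealing, but note it is logically weaker on its own: it only pins down $\sigma_v(q)$ up to sign, so the positive-normalization observation you flag is not a cosmetic subtlety but the step that actually closes the argument. That sign discussion is a useful addition not present in the paper. One small caveat worth stating explicitly: $\sigma_v$ is only defined when $v(v\cdot s)\ne 0$, i.e., when not both $[s]_0=0$ and $v\cdot\star[s]_2=0$ (precisely the excluded case from the appendix), and the idempotence claim should be read as holding on that domain.
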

\begin{proof}
It is enough to prove that $\sigma_v(\sigma_v(s)) = \sigma_v(s)$ for any vector $v$ and any spinor $w$:
\begin{align*}
& \sigma_v(\sigma_v(s)) = \N(v (v \cdot \N(v (v \cdot s)))) = \N(v (v \cdot \frac{a \|v\| + \e_{123} \N(v) (v \cdot \star [s]_2)}{\sqrt{(v \cdot \star[s]_2)^2 + a^2 \|v\|^2}})) \\
& = \N(v \frac{v \cdot a \|v\| + \e_{123} (v \cdot \N(v)) (v \cdot \star [s]_2)}{\sqrt{(v \cdot \star[s]_2)^2 + a^2 \|v\|^2}}) = \N(v \frac{v \cdot a \|v\| + \e_{123} \|v\| v \cdot \star [s]_2}{\sqrt{(v \cdot \star[s]_2)^2 + a^2 \|v\|^2}}) \\
& = \N(v \frac{(\|v\| v) \cdot (a + \star [s]_2)}{\sqrt{(v \cdot \star[s]_2)^2 + a^2 \|v\|^2}}) = \N(\|v\|^2 \frac{\N(v) (v \cdot s)}{\sqrt{(v \cdot \star[s]_2)^2 + a^2 \|v\|^2}}) \\
& = \N(v (v \cdot s)) = \sigma_v(s)
\end{align*}
which completes the proof.
\end{proof}


%
%
%

\section{Applications}

In this section an exemplary application of the proposed theoretical results is presented. It is a fast, concise and numerically stable algorithm for calculating swing-twist decomposition of a spinor.

%
%
%

\subsection{An efficient algorithm for swing-twist decomposition of a spinor}

To propose an efficient method of calculating swing-twist decomposition the following formula is used for twist projection function:
\begin{align*}
& \sigma_v(s) = \N(v (v \cdot s)) = \N(v (v \cdot (a + \e_{123} \star [s]_2)) = \\
& = \N(v (v a + \e_{123} v \cdot \star [s]_2)) = \N(a \|v\|^2 + \e_{123} v (v \cdot \star [s]_2))
\end{align*}
If a rotation is represented by a pinor normalization can be omitted. In case of spinor representation normalization is needed and requires square root computation or equivalently requires arithmetic with square root extension. The proposed method uses swing-after-twist representation and is presented in algorithm \ref{algorithm_main}.
\begin{algorithm}
\caption{Compute swing-twist decomposition of a spinor $s = p q$}
\label{algorithm_main}
\begin{algorithmic}
\REQUIRE $v = x \e_1 + y \e_2 + z \e_3$, $s = a + b \e_{12} + c \e_{23} + d \e_{31}$
\ENSURE $\|v\| \neq 0$
\STATE $u \leftarrow x c + y d + z b$
\STATE $n \leftarrow x^2 + y^2 + z^2$
\STATE $m \leftarrow a n$
\STATE $l \leftarrow \sqrt{m^2 + u^2 n}$
\STATE $q \leftarrow \frac{m}{l} + \frac{z u}{l} \e_{12} + \frac{x u}{l} \e_{23} + \frac{y u}{l} \e_{31}$
\STATE $p \leftarrow s \tilde{q}$
\RETURN p, q
\end{algorithmic}
\end{algorithm}

Because of its simplicity the proposed method favourably compares to existing methods for computing swing-twist decomposition (as presented in section \ref{section_existing_and_related_solutions}.


\appendix

\section{Which decompositions are impossible}

When a combination of a spinor and a base vector cannot be decomposed into swing and twist. From theorem \ref{thm:swing_twist_decomposition_sat_type} or \ref{thm:swing_twist_decomposition_tas_type} a decomposition is impossible when:
\begin{equation*}
s v s^{-1} = -v
\end{equation*}
the above condition can be rewritten in terms of spinor coordinates. Using \eqref{eqn:spinor_rotation_formula} one writes
\begin{align} \label{eqn:impossible_decompositions_set}
\left\{
\begin{aligned}
& (a^2 - b^2 + c^2 - d^2 + 1) x + 2 y (a b + c d) + 2 z (b c - a d) = 0 \\
& (a^2 - b^2 - c^2 + d^2 + 1) y + 2 x (c d - a b) + 2 z (b d + a c) = 0 \\
& (a^2 + b^2 - c^2 - d^2 + 1) z + 2 x (b c + a d) + 2 y (b d - a c) = 0 \\
& a^2 + b^2 + c^2 + d^2 - 1 = 0
\end{aligned}
\right.
\end{align}
Denote as $W_i$ the left side of $i$th equation of \ref{eqn:impossible_decompositions_set}. A valid solution $(a, b, c, d)$ must satisfy the following equation:
\begin{equation*}
x W_1 + y W_2 + z W_3 + (x^2 + y^2 + z^2) W_4 = 0
\end{equation*}
the above formula can be expanded:
\begin{align*}
& (a^2 - b^2 + c^2 - d^2 + 1) x^2 + 2 x y (a b + c d) + 2 x z (b c - a d) \\
& + (a^2 - b^2 - c^2 + d^2 + 1) y^2 + 2 x y (c d - a b) + 2 y z (b d + a c) \\
& + (a^2 + b^2 - c^2 - d^2 + 1) z^2 + 2 x z (b c + a d) + 2 y z (b d - a c) \\
& + (x^2 + y^2 + z^2) (a^2 + b^2 + c^2 + d^2 - 1) = 0 \\
& (a^2 - b^2 + c^2 - d^2 + 1 + a^2 + b^2 + c^2 + d^2 - 1) x^2 + 2 x y c d + 2 x z b c \\
& + (a^2 - b^2 - c^2 + d^2 + 1 + a^2 + b^2 + c^2 + d^2 - 1) y^2 + 2 x y c d + 2 y z b d \\
& + (a^2 + b^2 - c^2 - d^2 + 1 + a^2 + b^2 + c^2 + d^2 - 1) z^2 + 2 x z b c + 2 y z b d = 0 \\
& (c x + d y + b z)^2 + a^2 (x^2 + y^2 + z^2) = 0
\end{align*}
which implies that the two identities must hold:
\begin{equation*}
a^2 (x^2 + y^2 + z^2) \quad \wedge \quad c x + d y + b z = 0
\end{equation*}
but since $\|v\| \ne 0$ it must be
\begin{equation*}
a = 0 \quad \wedge \quad v \cdot \star [s]_2 = 0
\end{equation*}
These conditions define when a spinor can be written as a swing-twist decomposition in respect to a given base vector.





\end{document}